\setlist[enumerate]{leftmargin=.5in}
\setlist[itemize]{leftmargin=.5in}
\crefname{hypothesis}{Hypothesis}{Hypotheses}
\title{Robust Hierarchical Clustering for Directed Networks: An Axiomatic Approach\thanks{Authors are sorted in alphabetical order.
\funding{This work was funded by NSF RI 1422400, NSF DMS 1723003, and NSF CCF 1740761 (F. M\'emoli).}}}
\author{Gunnar Carlsson\thanks{Dept.\! of Mathematics, Stanford University 
  (\email{gunnar@math.stanford.edu}).}
\and Facundo M\'emoli\thanks{Depts.\! of Mathematics and of Computer Science and Eng., Ohio State University
  (\email{memoli@math.osu.edu}).}
\and Santiago Segarra\thanks{Dept.\! of Electrical and Computer Eng., Rice University 
	(\email{segarra@rice.edu}).} }
\newcommand{\myparagraph}[1]{\needspace{1\baselineskip}\medskip\noindent {\it #1.}}
\newenvironment{indentedparagraph}[1] 
{\begin{list}{}%
         {\setlength{\leftmargin}{11pt}
          \setlength{\topsep}{10pt}}
         \item[]{\it #1.}}
{\end{list}}
\tikzstyle{phantom vertex} = [ ellipse, 
\tikzstyle{red vertex}   = [black, fill = red!20,   phantom vertex, draw]
\tikzstyle{black vertex} = [black, fill = black!20, phantom vertex, draw]
\tikzstyle{blue vertex}  = [black, fill = blue!20,  phantom vertex, draw]
\tikzstyle{green vertex} = [black, fill = green!20,  phantom vertex, draw]
\tikzstyle{vertex}       = [draw, phantom vertex]
\tikzstyle{point} = [ellipse, inner sep=0pt, draw, fill=white, anchor = center,
\def \sep {\text{\normalfont sep}}
\def \dom {\text{\normalfont dom}}
\def\R {\text{\normalfont R}}
\def\NR {\text{\normalfont NR}}
\def\SR {\text{\normalfont SR}}
\def\SL {\text{\normalfont SL}}
\begin{document}

\maketitle

% REQUIRED
\begin{abstract}
  We provide a complete taxonomic characterization of robust hierarchical clustering methods for directed networks following an axiomatic approach. 
  We begin by introducing three practical properties associated with the notion of robustness in hierarchical clustering: linear scale preservation, stability, and excisiveness. 
  Linear scale preservation enforces imperviousness to change in units of measure whereas stability ensures that a bounded perturbation in the input network entails a bounded perturbation in the clustering output.
  Excisiveness refers to the local consistency of the clustering outcome. 
  Algorithmically, excisiveness implies that we can reduce computational complexity by only clustering a subset of our data while theoretically guaranteeing that the same hierarchical outcome would be observed when clustering the whole dataset.
  In parallel to these three properties, we introduce the concept of representability, a generative model for describing clustering methods through the specification of their action on a collection of networks.
  Our main result is to leverage this generative model to give a precise characterization of all robust -- i.e., excisive, linear scale preserving, and stable  -- hierarchical clustering methods for directed networks.
  We also address the implementation of our methods and describe an application to real data.
\end{abstract}

\section{Introduction} \label{sec_introduction}

The concept of clustering, i.e., partitioning a dataset into groups such that objects in one group are more similar to each other than they are to objects outside the group, is a fundamental tool for the advancement of knowledge in a wide range of disciplines from, e.g., medicine \cite{WalshRybicki06} to marketing \cite{PunjStewart83}. 
Motivated by its relevance, literally hundreds of clustering algorithms have been developed in the past decades \cite{clusteringref, lance67general, GirvanNewman02, GirvanNewman04,  NgEtal02, ShiMalik00, spectral-clustering} mainly for the application to finite metric spaces but also for the increasingly relevant case of directed networks \cite{SaitoYadohisa04}, in which the dissimilarity from node $x$ to node $x'$ may differ from the one from $x'$ to $x$ \cite{boyd-asymmetric, hubert-min,MeilaPentney07,murtagh-multidimensional,PentneyMeila05,slater1976hierarchical,slater1984partial,tarjan-improved,ZhaoKarypis05}. 
Directionality naturally arises in multiple contexts~\cite{mackay_2020_directed,marques_2020_signal}.
Apart from the canonical example of a food web, information networks such as scientific citations or the world wide web are typically directed.
Gene-regulatory networks are highly nonreciprocal and this lack of reciprocity needs to be accounted for when, e.g., grouping (clustering) genes that might have similar functional properties~\cite{yu_2006_genomic}.
Moreover, in social networks, pairwise relations are rarely purely symmetric and this asymmetry is key to accurately separate leaders from followers~\cite{segarra_2017_blind}.
In these settings, effectively extracting knowledge from real and noisy data requires a theoretical understanding of robust techniques to analyze directed networks.

Although the theoretical underpinnings of clustering are not in general as well developed as its practical applications \cite{sober,science_art,vonlux-david}, the foundations of clustering in metric spaces have been developed over the past two decades~\cite{ben-david-ackermann, clust-um, CarlssonMemoli10, kleinberg, VanLaarhoven14, Meila05,ben-david-reza}. 
For the specific case of hierarchical clustering \cite{clusteringref, lance67general,ZhaoKarypis05} where, instead of a single partition we look for a family of partitions indexed by a resolution parameter, some theoretical understanding has been achieved for the case of finite metric spaces \cite{clust-um,CarlssonMemoli10} and for the more general case of directed networks \cite{Carlssonetal13_3, Carlssonetal13,Carlssonetal14, Carlsson_2018_hierarchical,memoli2020motivic}. 
Of special interest to our work is \cite{Carlsson_2018_hierarchical}, where two axioms encoding desirable features of hierarchical clustering methods were proposed, and an infinite but bounded family of methods satisfying these axioms (denominated the family of admissible methods) was identified. 
However, the disadvantage of this approach is that these two axioms are not sufficient to ensure robustness of the clustering methods abiding by them.

In the current paper, we build upon \cite{CarlssonMemoli10}, \cite{Carlssonetal13}, and~\cite{Carlsson_2018_hierarchical}, and deepen the characterization of hierarchical clustering methods on directed networks to identify those with robustness properties, which we view as encoding practical relevance. 
We define and analyze three properties of practical relevance, namely \emph{excisiveness, linear scale invariance, and stability}, and say that a \emph{hierarchical clustering method is robust if it possesses these three properties}.

{\bf Contributions.}
The contributions of this paper are threefold:\\
	i) We introduce a formal definition of robustness for hierarchical clustering methods based on the properties of excisiveness, linear scale preservation, and stability. Furthermore, we determine a subset of robust methods among a set of established admissible methods.\\
	ii) We introduce the notion of representability as a generative model for hierarchical clustering methods, where a method is defined through the specification of its local behavior. We also show that every representable method can be factorized into two well-defined operations: a symmetrizing operation followed by a well-established hierarchical clustering method.\\
	iii)  We relate the two aforementioned notions by showing that, within the so-called admissible methods, representability is equivalent to robustness. This novel characterization result implies that any admissible and robust clustering method can be generated using a collection of representers.

{\bf Related work.}
The study of robust clustering methods has been an active area of research for a few decades now~\cite{ana_2003_robust, dave_1997_robust, garcia_2010_review}.
If we focus on the specific case of hierarchical clustering, it has long been known that certain linkage functions -- such as Ward's linkage~\cite{ward_1963_hierarchical} -- are more tolerant to noise in the input data than others. 
Moreover, relatively ad hoc methods -- like Wishart's method~\cite{wishart-1969} -- have been developed to avoid shortcomings of other linkage functions by discarding low-density regions of the data.
More recently, improvements over these classical methods~\cite{chaudhuri_2010_rates}, novel schemes that specifically focus on the imperviousness to outliers~\cite{guha_1998_cure, lattanzi_2015_robust}, and robust algorithms that can accommodate categorical attributes~\cite{guha_2000_rock} have been developed.
Active methods have also been proposed where the similarities are selectively sampled before linkage, thus gaining robustness to a limited fraction of anomalous similarities~\cite{eriksson_2011_active}.
From a more theoretical viewpoint, \cite{balcan_2014_robust} proposes a robust linkage function to provably cluster data that satisfies a `good neighborhood' property, which is a relaxation of the strict separation property (where all points are more similar to points in their own cluster than to points in any other cluster).
The current paper has two main differences with all the aforementioned body of work: 
\begin{itemize}
    \item[-] The definition of robustness here considered is novel and precisely defined based on the three properties of excisiveness, linear scale preservation, and stability.
    \item[-] The result of our theoretical study is not just a single hierarchical clustering method that can be shown to be robust but rather \emph{a generative model to construct all possible robust hierarchical clustering methods under the considered framework.}
\end{itemize}

\vspace{1mm}
{\bf Paper outline.}
After introducing basic concepts about clustering and networks (Section~\ref{sec_preliminaries}), in Section~\ref{S:robust} we present a formal definition of robustness by introducing the properties of excisiveness (Section~\ref{Ss:excisiveness}), linear scale preservation (Section~\ref{Ss:linear_scale_preservation}), and stability (Section~\ref{Ss:stability}).
Representability, a notion introduced in Section \ref{sec_representability}, provides a generative model for clustering where a method is defined through the specification of its behavior in a collection of special networks. 
In Section~\ref{sec_representability_and_single_linkage}, we show that every representable clustering method can be decomposed into a symmetrizing operation followed by the application of single linkage clustering. 
Although seemingly unrelated with the robustness properties previously mentioned, representability is a key notion to characterize clustering methods. 
Indeed, in Section \ref{subsec_generative_model} we present our main result stating that an admissible clustering method is representable if and only if it is robust. 
Finally, in Section \ref{sec_numerical_experiments}, we illustrate the main result by implementing a representable clustering method, testing it on a real-world economic network, and confirming its robustness.

\section {Preliminaries}\label{sec_preliminaries}

A network $N$ is defined as a pair $(X,A_X)$ where $X$ is a finite set of $n$ points or nodes and $A_X: X \times X \to \reals_+$ is a dissimilarity function. Dissimilarities $A_X(x,x')$ from $x$ to $x'$ are non-negative, and $0$ if and only if $x=x'$, but may not satisfy the triangle inequality and may be directed or asymmetric, i.e., $A_X(x,x') \neq A_X(x',x)$ for some $x, x' \in X$. Given a positive real $\alpha$, define the multiple of a network $\alpha * N := (X, \alpha \, A_X)$. Let $\ccalN$ denote the collection of all networks. Networks $N\in\ccalN$ can have different node sets $X$ and different dissimilarities $A_X$. We focus our study on directed networks since these general structures include, as particular cases, undirected networks and finite metric spaces.

The output of hierarchically clustering the network $N=(X,A_X)$ is a dendrogram $D_X$, that is a collection of partitions $D_X(\delta)$ indexed by the resolution parameter $\delta\geq 0$ satisfying the following conditions:
\begin{itemize}
    \item[-] The partitions in $D_X(\delta)$ are nested, i.e., if $x$ and $x'$ are in the same partition at resolution $\delta$, then they stay co-clustered for all larger resolutions $\delta' > \delta$.
    \item[-] $D_X(0) = \big\{ \{x\}, \, x\in X\big\}$, i.e., for the resolution parameter $\delta=0$ each point $x \in X$ must form its own cluster.
    \item[-] $D_X(\delta_0) = \big\{ X \big\}$, i.e., for some sufficiently large resolution $\delta_0$ all nodes must belong to the same cluster.
\end{itemize}
From these fundamental requirements and a technical condition of continuity (for all $\delta$ there exists $\epsilon>0$ such that $D_X(\delta)=D_X(\gamma)$ for $\gamma \in [\delta, \delta+\epsilon]$) it follows that dendrograms can be represented as trees \cite[Sec. 3.1]{clust-um}. 
The interpretation of a dendrogram is that of a structure which yields different clusterings at different resolutions. 
When $x$ and $x'$ are co-clustered at resolution $\delta$ in $D_X$ we say that they are equivalent at that resolution and write $x\sim_{D_X(\delta)} x'$. 

Given a network $(X, A_X)$ and $x, x' \in X$, a chain $C(x, x')$ is an ordered sequence of nodes in $X$, $C(x, x')=[x=x_0, x_1, \ldots , x_{l-1}, x_l=x']$,
which starts at $x$ and finishes at $x'$.
The \emph{links} of a chain are the edges connecting consecutive nodes of the chain in the direction given by it. We define the \emph{cost} of chain $C(x, x')$ as the maximum dissimilarity $\max_{i | x_i\in C(x,x')}A_X(x_i,x_{i+1})$ encountered when traversing its links in order. The directed minimum chain cost $\tdu^*_X(x, x')$ between $x$ and $x'$ is defined as the minimum cost among all the chains connecting $x$ to $x'$,
\begin{align}\label{eqn_nonreciprocal_chains} 
\tdu^*_X(x, x') := \min_{C(x,x')} \,\,
\max_{i | x_i\in C(x,x')} A_X(x_i,x_{i+1}).
\end{align} 
An ultrametric $u_X$ on the set $X$ is a function $u_X:X \times X \to \reals_+$ that satisfies symmetry $u_X(x, x')=u_X(x', x)$, identity $u_X(x, x')=0 \iff x=x'$ and the strong triangle inequality 
\begin{equation}\label{eqn_strong_triangle_inequality}
u_X(x,x') \leq \max \big(u_X(x,x''),  u_X(x'',x') \big),
\end{equation}
for all $x, x', x'' \in X$.
For a given dendrogram $D_X$, consider the minimum resolution at which $x$ and $x'$ are co-clustered and define 
\begin{equation}\label{eqn_theo_dendrograms_as_ultrametrics_10}
u_X(x,x') := \min \big\{ \delta\geq 0 \, |\,  x\sim_{D_X(\delta)} x' \big\}.
\end{equation}
It can be shown that the function $u_X$ as defined in \eqref{eqn_theo_dendrograms_as_ultrametrics_10} is an ultrametric on the set $X$, from where it follows that dendrograms and finite ultrametric spaces are equivalent~\cite{clust-um}. However, ultrametrics are more convenient than dendrograms for the results developed in this paper.

A hierarchical clustering method is defined as a map $\ccalH:\ccalN \to \ccalD$ from the collection of networks $\ccalN$ to the collection of all dendrograms $\ccalD$, or, equivalently, as a map $\ccalH:\ccalN \to \ccalU$ mapping every (possibly directed) network into the collection $\ccalU$ of networks with ultrametrics as dissimilarity functions. 

This loose definition of a hierarchical clustering method allows the existence of a wide variety of methods, most of them of little practical utility. Thus, in Section~\ref{subsec_admissible_hierarchical_clustering_algorithms} we recall an axiomatic construction built to select a subfamily of admissible clustering methods. 

{For future reference, we say that two methods $\ccalH$ and $\ccalH'$ are \emph{equivalent}, denoted $\ccalH\equiv\ccalH'$, if $\ccalH(N)=\ccalH'(N)$ for all networks $N\in \ccalN.$} 
{We also recall the definition of single linkage hierarchical clustering $\ccalH^\SL$ of \emph{symmetric or undirected} networks with output ultrametrics $u_X^\SL(x,x'):=\min_{C(x,x')}\max_{i}A_X(x_i,x_{i+1})$.}

% SUBSECTION  %%%%%%%%%%%%%%%%%%%%%%%%%%%%%%%%%%%%
\subsection{Admissible hierarchical clustering methods}\label{subsec_admissible_hierarchical_clustering_algorithms}

%%%%%%%%%   F   I   G   U   R   E   %%%%%%%%%%%%%%%%%%%%%%%%%%%%%%%%%%%%%%%%%%%%%
%
\begin{figure}
	\centering
	\centerline{\def \thisplotscale {0.5}
\def \unit {\thisplotscale cm}

{\small
\begin{tikzpicture}[-stealth, shorten >=2 ,scale = \thisplotscale, font=\footnotesize]

	% node x
	\node[blue vertex] (x) {$x$} ;

    % nodes in path from x to x'. Also includes node x'
	\path (x) ++ (4,0)   node [blue vertex]    (x1)   {$x_1$} 
	          ++ (4,0)   node [phantom vertex] (x2)   {$\ldots$}
	          ++ (0.5,0) node [phantom vertex] (xlm1) {$\ldots$} 
	          ++ (4,0)   node [blue vertex]    (xl)   {$x_{l-1}$}
	          ++ (4,0)    node [blue vertex]   (xp)   {$x'$}; 

    % Edges in path from x to x'	          
	\path (x)    edge [bend left, above] node {$A_X(x,x_1)$}       (x1);	
	\path (x1)   edge [bend left, above] node {$A_X(x_1,x_2)$}     (x2);	
	\path (xlm1) edge [bend left, above] node {$A_X(x_{l-2},x_{l-1})$} (xl);	
	\path (xl)   edge [bend left, above] node {$A_X(x_{l-1},x')$}      (xp);	

    % Edges in path from x' to x
	\path (x1)   edge [bend left, below] node {$A_X(x_1,x)$}       (x);
	\path (x2)   edge [bend left, below] node {$A_X(x_2,x_1)$}     (x1);
	\path (xl)   edge [bend left, below] node {$A_X(x_{l-1},x_{l-2})$} (xlm1);
	\path (xp)   edge [bend left, below] node {$A_X(x',x_{l-1})$}      (xl);

\end{tikzpicture}
} }
	\vspace{-0.1in}
	\caption{Reciprocal clustering. Nodes $x,x'$ cluster at resolution $\delta$ if they can be joined with a bidirectional chain of maximum dissimilarity $\delta$ [cf. \eqref{eqn_reciprocal_clustering}].}
	\vspace{-0.05in}
	\label{fig_reciprocal_path}
\end{figure}
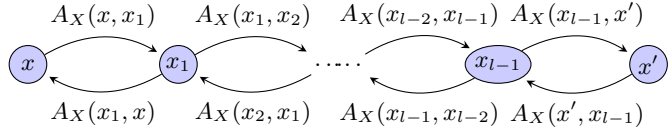
%%%%%%%%%%%%%%%%%%%%%%%%%%%%%%%%%%%%%%%%%%%%%%%%%%%%%%%%%%%%%%%%

In \cite{Carlssonetal13, Carlsson_2018_hierarchical}, the authors impose the following two requirements on clustering methods:

\myparagraph{(A1) Axiom of Value} Given a two-node network $N=(\{p,q\},A_{p,q})$ with $A_{p,q}(p,q)=\alpha$, and  $A_{p,q}(q,p)=\beta$, the ultrametric $(X,u_{p,q})=\ccalH(N)$ output by $\ccalH$ satisfies
\begin{equation}\label{eqn_two_node_network_ultrametric}
u_{p,q}(p,q) = \max(\alpha,\beta).
\end{equation}
\myparagraph{(A2) Axiom of Transformation} Given networks $N_X=(X,A_X)$ and $N_Y=(Y,A_Y)$ and a dissimilarity reducing map $\phi:X\to Y$, i.e. a map $\phi$ such that for all $x,x' \in X$ it holds $A_X(x,x')\geq A_Y(\phi(x),\phi(x'))$, the outputs $(X,u_X)=\ccalH(N_X)$ and $(Y,u_Y)=\ccalH(N_Y)$ satisfy 
\begin{equation}\label{eqn_dissimilarity_reducing_ultrametric}
u_X(x,x') \geq u_Y(\phi(x),\phi(x')).
\end{equation} 

\medskip\noindent We say that node $x$ is able to influence node $x'$ at resolution $\delta$ if the dissimilarity from $x$ to $x'$ is not greater than $\delta$. In two-node networks, our intuition dictates that a cluster is formed if nodes $p$ and $q$ are able to influence each other. Thus, axiom (A1) states that in a network with two nodes, the dendrogram $D_X$ has them merging at the maximum value of the two dissimilarities between them. 
Axiom (A2) captures the intuition that if a network is transformed such that some nodes become more similar but no pair of nodes increases its dissimilarity, then the transformed network should cluster at lower resolutions than the original one. Formally, (A2) states that a contraction of the dissimilarity function $A_X$ entails a contraction of the associated ultrametric $u_X$.

A hierarchical clustering method $\ccalH$ is \emph{admissible} if it satisfies axioms (A1) and (A2). Two admissible methods of interest are reciprocal and nonreciprocal clustering, as defined next. 
\begin{indentedparagraph}{Reciprocal and nonreciprocal clustering} 
The \emph{reciprocal} clustering method $\ccalH^{\R}$ outputs the ultrametric $(X,u^{\R}_X)=\ccalH^{\R}(X,A_X)$ defined as 
\begin{align}\label{eqn_reciprocal_clustering} 
u^{\R}_X(x,x')
&:= \min_{C(x,x')} \, \max_{i | x_i\in C(x,x')}
\bbarA_X(x_i,x_{i+1}),
\end{align}
where $\bbarA_X(x,x'):=\max(A_X(x,x'), A_X(x',x))$ for all $x, x' \in X$.
The \emph{nonreciprocal} clustering method $\ccalH^{\NR}$ outputs the ultrametric $(X,u^{\NR}_X)=\ccalH^{\NR}(X,A_X)$ given by 
\begin{align}\label{eqn_nonreciprocal_clustering} 
u^{\NR}_X(x,x') := \max \Big( \tdu^{*}_X(x,x'),\ \tdu^{*}_X(x',x )\Big).
\end{align} 
\end{indentedparagraph}

\noindent 
Intuitively, in \eqref{eqn_reciprocal_clustering} we search for chains $C(x,x')$ linking nodes $x$ and $x'$. 
Then, for a given chain, we walk from $x$ to $x'$ and determine the maximum dissimilarity, in either the forward or backward direction, across all links in the chain. 
The reciprocal ultrametric $u^{\R}_X(x,x')$ is the minimum of this value across all possible chains; see Fig.~\ref{fig_reciprocal_path}. 
Putting it differently, reciprocal clustering joins $x$ and $x'$ at resolution $\delta$ if it is possible to go back and forth at maximum cost $\delta$ through the same chain. 
By contrast, nonreciprocal clustering permits different chains. 
In \eqref{eqn_nonreciprocal_clustering}, we implicitly consider forward chains $C(x,x')$ going from $x$ to $x'$ and backward chains $C(x',x)$ from $x'$ to $x$. 
We then determine the respective maximum dissimilarities and search independently for the forward and backward chains that minimize the respective maximum dissimilarities. 
The nonreciprocal ultrametric $u^{\NR}_X(x,x')$ is the maximum of these two minimum values; see Fig.~\ref{fig_nonreciprocal_path}.

%%%   F   I   G   U   R   E   %%%%%%%%%%%%%%%%%%%%%%%%%%%%%%%%%%%
%
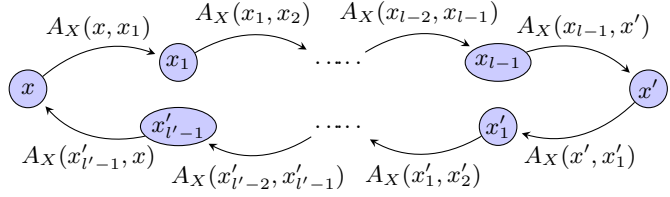
\begin{figure}
	\centering
	\def \thisplotscale {0.5}
\def \unit {\thisplotscale cm}
{\small
\begin{tikzpicture}[-stealth, shorten >=2 ,scale = \thisplotscale, font=\footnotesize]
	
	% node x
	\node[blue vertex] (x) {$x$} ;

    % nodes in path from x to x'. Also includes node x'
	\path (x) ++ (4,0.7)   node [blue vertex]    (x1)   {$x_1$} 
	          ++ (4,0)   node [phantom vertex] (x2)   {$\ldots$}
	          ++ (0.5,0) node [phantom vertex] (xlm1) {$\ldots$} 
	          ++ (4,0)   node [blue vertex]    (xl)   {$x_{l-1}$}
	          ++ (4,-0.7)  node [blue vertex]    (xp)   {$x'$}; 

    % nodes in path from x' to x
	\path (x) ++ (4,-1)  node [blue vertex]    (xlp)   {$x'_{l'-1}$} 
	          ++ (4,0)   node [phantom vertex] (xlm1p) {$\ldots$}
	          ++ (0.5,0) node [phantom vertex] (x2p)   {$\ldots$} 
	          ++ (4,0)   node [blue vertex]    (x1p)   {$x'_{1}$}; 

    % Edges in path from x to x'	          
	\path (x)    edge [bend left, above] node {$A_X(x,x_1)$}       (x1);	
	\path (x1)   edge [bend left, above] node {$A_X(x_1,x_2)$}     (x2);	
	\path (xlm1) edge [bend left, above] node {$A_X(x_{l-2},x_{l-1})$} (xl);	
	\path (xl)   edge [bend left, above] node {$A_X(x_{l-1},x')$}      (xp);	

    % Edges in path from x' to x
	\path (xp)    edge [bend left, below] node {$A_X(x',x'_1)$}       (x1p);
	\path (x1p)   edge [bend left, below] node {$A_X(x'_1,x'_2)$}     (x2p);
	\path (xlm1p) edge [bend left, below] node {$A_X(x'_{l'-2},x'_{l'-1})$} (xlp);
	\path (xlp)   edge [bend left, below] node {$A_X(x'_{l'-1},x)$}       (x);

\end{tikzpicture}
}
	\vspace{-0.15in}
	\caption{Nonreciprocal clustering. Nodes $x, x'$ cluster at resolution $\delta$ if they can be joined in both directions with possibly different chains of maximum dissimilarity $\delta$ [cf. \eqref{eqn_nonreciprocal_clustering}].}
	\vspace{-0.05in}
	\label{fig_nonreciprocal_path}
\end{figure}
%%%%%%%%%%%%%%%%%%%%%%%%%%%%%%%%%%%%%%%%%%%%%%%%%%

These two methods exemplify extremal behaviors. Indeed, reciprocal and nonreciprocal clustering bound the ultrametrics generated by all admissible methods, as stated next.

%%%   T   H   E   O   R   E   M   %%%%%%%%%%%%%%%%%%%%%%%%%%%%%%%
%
\begin{theorem}  \label{theo_extremal_ultrametrics} 
{\normalfont (\cite[Theorem 4]{Carlsson_2018_hierarchical})} Consider an arbitrary network $N=(X,A_X)$ and let $u^{\R}_X$ and $u^{\NR}_X$ be the associated reciprocal and nonreciprocal ultrametrics as defined in \eqref{eqn_reciprocal_clustering} and \eqref{eqn_nonreciprocal_clustering}. Then, for any admissible method $\ccalH$ the output ultrametric $(X,u_X)=\ccalH(X,A_X)$ is such that for all pairs $x,x'$,
	\begin{equation}\label{eqn_theo_extremal_ultrametrics} 
	u^{\NR}_X(x,x') \leq  u_X(x,x') \leq  u^{\R}_X(x,x').
	\end{equation} 
	In particular, $u^{\NR}_X=u^{\R}_X$ whenever $(X, A_X)$ is undirected.
\end{theorem}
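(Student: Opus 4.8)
The plan is to prove the two inequalities separately, in each case pitting the output $u_X$ of the admissible method $\ccalH$ against a carefully chosen two-node network: the Axiom of Transformation (A2) transfers information between $X$ and the two-node network, while the Axiom of Value (A1) pins down the latter's ultrametric. Throughout I will use that $u_X=\ccalH(X,A_X)$ is a genuine ultrametric, hence symmetric and subject to the strong triangle inequality \eqref{eqn_strong_triangle_inequality}. Since both $u_X$ and $u^{\NR}_X$ are symmetric in their two arguments, I am free to relabel $x$ and $x'$ whenever convenient.

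For the upper bound $u_X(x,x')\le u^{\R}_X(x,x')$, I would set $\delta:=u^{\R}_X(x,x')$ and pick a chain $[x=x_0,x_1,\dots,x_l=x']$ attaining the minimum in \eqref{eqn_reciprocal_clustering}, so that $\bbarA_X(x_i,x_{i+1})\le\delta$ for every link, i.e.\ both $A_X(x_i,x_{i+1})\le\delta$ and $A_X(x_{i+1},x_i)\le\delta$. For each link I consider the two-node network on $\{p,q\}$ with both dissimilarities equal to $\delta$, which by (A1) has $u_{p,q}(p,q)=\delta$. The map sending $p\mapsto x_i$, $q\mapsto x_{i+1}$ is dissimilarity reducing by the two inequalities above, so (A2) gives $u_X(x_i,x_{i+1})\le\delta$ for each $i$. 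The strong triangle inequality then propagates this along the chain, $u_X(x,x')\le\max_i u_X(x_i,x_{i+1})\le\delta$, as desired.

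The lower bound $u^{\NR}_X(x,x')\le u_X(x,x')$ is where the real work lies. Using the symmetry remark I may assume $\delta:=u^{\NR}_X(x,x')=\tdu^*_X(x,x')$, so the forward directed cost realizes the maximum in \eqref{eqn_nonreciprocal_clustering}. I would then introduce the threshold reachability block $P:=\{z\in X:\tdu^*_X(x,z)<\delta\}$ and its complement $Q:=X\setminus P$, noting $x\in P$ and (because $\tdu^*_X(x,x')=\delta\not<\delta$) $x'\in Q$. The crucial claim is that every cut edge has $A_X(a,b)\ge\delta$ for all $a\in P,\ b\in Q$: otherwise concatenating a cheap chain $x\rightsquigarrow a$ with the single edge $a\to b$ would certify $\tdu^*_X(x,b)<\delta$ and force $b\in P$, a contradiction. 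Setting $\alpha:=\min_{a\in P,\,b\in Q}A_X(a,b)\ge\delta$ and $\beta:=\min_{a\in P,\,b\in Q}A_X(b,a)>0$, the map $\phi:X\to\{p,q\}$ collapsing $P$ to $p$ and $Q$ to $q$ is dissimilarity reducing onto the two-node network with dissimilarities $(\alpha,\beta)$; hence (A2) and (A1) give $u_X(x,x')\ge u_{p,q}(p,q)=\max(\alpha,\beta)\ge\delta$.

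The hard part is precisely this lower bound: identifying the right comparison object, namely the bottleneck-reachability set $P$, and verifying that the induced cut has weight at least $\delta$ exploits the minimax nature of $\tdu^*_X$, while choosing $\alpha$ to be the minimal cut weight is what makes $\phi$ simultaneously dissimilarity reducing and compatible with (A1). The upper bound, by contrast, is routine once the link-wise reduction through A2 is spotted. Finally, for the ``in particular'' claim, when $(X,A_X)$ is undirected we have $\bbarA_X=A_X$, so $u^{\R}_X$ coincides with single linkage; moreover each directed chain cost equals its reverse, giving $\tdu^*_X(x,x')=\tdu^*_X(x',x)=u^{\R}_X(x,x')$ and hence $u^{\NR}_X=u^{\R}_X$, at which point the sandwich \eqref{eqn_theo_extremal_ultrametrics} collapses to equality.
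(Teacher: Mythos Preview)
The paper does not actually prove this theorem; it is stated with a citation to \cite[Theorem 4]{Carlsson_2018_hierarchical} and no argument is given here, so there is nothing in the present paper to compare your proof against.

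Your proof is correct. The upper bound via link-by-link application of (A2) from the symmetric two-node network, followed by the strong triangle inequality, is the standard route. For the lower bound, your cut argument is clean and correct: the set $P=\{z:\tdu^*_X(x,z)<\delta\}$ is well defined once you note (as you implicitly do) that the case $x=x'$ is trivial, so $\delta>0$ and hence $x\in P$; the verification that every edge from $P$ to $Q$ has weight at least $\delta$ is exactly the minimax property of $\tdu^*_X$; and choosing $\alpha,\beta$ as the minimal cut weights in each direction is precisely what makes the collapse map $\phi$ dissimilarity reducing onto a bona fide two-node network (your observation $\beta>0$ being needed for the target to be a valid network). The undirected coda is immediate as you say.
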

%%%%%%%%%%%%%%%%%%%%%%%%%%%%%%%%%%%%%%%%%%%%%%%%%

\smallskip\noindent According to Theorem 1, nonreciprocal clustering yields uniformly minimal ultrametrics while reciprocal clustering yields uniformly maximal ultrametrics among all methods satisfying (A1)-(A2). Moreover, the existence of admissible methods strictly different from $\ccalH^{\NR}$ and $\ccalH^{\R}$ has been shown \cite[Prop. 1]{Carlssonetal13_2}. For symmetric networks, reciprocal and nonreciprocal clustering coincide, implying that there is a unique admissible method, which is equivalent to the well-known single linkage hierarchical clustering method \cite[Ch. 4]{clusteringref}. In Section~\ref{S:robust}, we present practical properties -- excisiveness, linear scale preservation, and stability -- which are not shared by every admissible method and we use them to further winnow the set of clustering methods.

\section{Robust hierarchical clustering methods}\label{S:robust}
We formalize the notion of a robust hierarchical clustering method as one that satisfies three properties: excisiveness (Section~\ref{Ss:excisiveness}), linear scale preservation (Section~\ref{Ss:linear_scale_preservation}), and stability (Section~\ref{Ss:stability}). 
In this section, we define, analyze, and give examples of methods satisfying these three properties.

%%%%%%%%%%%%%%%%%%%%%%%%%% E X C I S I V E N E S S   %%%%%%%%%%%%%%%%%%%%%%%%%%%%%%%%
\subsection{Excisiveness}\label{Ss:excisiveness}

Consider a clustering method $\ccalH$ and a given network $N=(X, A_X)$. Denote by $(X, u_X)=\ccalH(N)$ the ultrametric output, as $D_X$ the output dendrogram and, for a given resolution $\delta$, denote the dendrogram's partition by $D_X(\delta)=\{B_1(\delta),\ldots, B_{J(\delta)}(\delta)\}$, where each block $B_i(\delta)$ represents a cluster at resolution $\delta$. Consider then the induced subnetworks $N^\delta_i$ associated with each block $B_i(\delta)$ of $D_X(\delta)$ defined as 
\begin{align}\label{eqn_excisiveness_subnetworks}
N^\delta_i := \left(B_i(\delta),\ A_X \big|_{B_i(\delta) \times B_i(\delta)}\right),
\end{align}
where $A_X |_{B_i(\delta) \times B_i(\delta)}$ denotes the restriction of $A_X$ to the nodes in $B_i(\delta)$. 
In terms of ultrametrics, networks $N^\delta_i$ are such that their node set $B_i(\delta)$ satisfies
\begin{alignat}{3}\label{eqn_excisiveness_subnetworks_ultrametrics}
&u_X(x, x') \ \leq\ && \delta,\quad 
&& \text{for all\ } x,x' \in B_i(\delta), \nonumber\\
&u_X(x, x'')\ >   \ && \delta,\quad 
&& \text{for all\ } x \in B_i(\delta),\ 
x'' \notin B_i(\delta).
\end{alignat}
%

%%%   F   I   G   U   R   E   %%%%%%%%%%%%%%%%%%%%%%%%%%%%%%%%%%%
%
\begin{figure}
	\centering\def \thisplotscale {0.53}
\def \unit {\thisplotscale cm}

{\small
\begin{tikzpicture}[-stealth, shorten >=2, scale = \thisplotscale]

  %  BIG Dendrogram
   \draw [-stealth] (6.5,-2.5) -- (6.5,4.5);
    \draw [-stealth] (6.3,-2.3) -- (12.5,-2.3);
    \draw [-, draw=black!30] (9.5,-2.3) -- (9.5,4.3);
    
 \draw[thick, -] (6.5, -2) -- ++(1,0) -- ++(0 , 0.5 ) -- ++(-1, 0) -- ++(1 ,0 ) -- ++(0 , -0.25) -- ++(1 ,0 ) -- ++(0 ,1 ) -- ++(-1 ,0 ) -- ++(0 , -0.25 ) -- ++(-1 , 0) -- ++(1 ,0) -- ++(0,0.5) -- ++(-1,0) -- ++(1,0) -- ++(0,-0.25) -- ++(1,0) -- ++(0,-0.5) -- ++(0.5,0) -- ++(0,1) -- ++(-2.5,0) -- ++(2.5,0) -- ++(0,-0.5) -- ++(2,0) -- ++(0,3.25) -- ++(-0.5,0) -- ++(0,-1) -- ++(-1.5,0) -- ++(0,-0.75) -- ++(-1.5,0) -- ++(0,-0.5) -- ++(-1,0) -- ++(1,0) -- ++(0,0.5) -- ++(-1,0) -- ++(1,0) -- ++(0,0.5) -- ++(-1,0) -- ++(1,0) -- ++(0,-0.5) -- ++(1.5,0) -- ++(0,1.5) -- ++(-1.5,0) -- ++(0,-0.5) -- ++(-1,0) -- ++(1,0) -- ++(0,0.5) -- ++(-1,0) -- ++(1,0) -- ++(0,0.5) -- ++(-1,0) -- ++(1,0) -- ++(0,-0.5) -- ++(1.5,0) -- ++(0,-0.75) -- ++(1.5,0) -- ++(0,2) -- ++(-0.5,0) -- ++(0,-0.25) -- ++(-3.5,0) -- ++(3.5,0) -- ++(0,0.5) -- ++(-3.5,0) -- ++(3.5,0) -- ++(0,-0.25) -- ++(0.5,0) -- ++(0,-1) -- ++(0.5,0) -- ++(0,-1.625) -- ++(1,0);

    \node [below] at (12.5,-2.3) {$\delta$};
    \node [below] at (9.5,-2.3) {$\delta_1$};
   
   \draw[fill=green!20, opacity=0.5, rounded corners] (6.25,-2.2) rectangle (9.25,-0.05);

  %  SMALL Dendrogram
  
  \draw [-stealth] (6.5,-6.5) -- (6.5,-3.5);
  \draw [-stealth] (6.3,-6.3) -- (10.5,-6.3);
  \draw [-, draw=black!30] (9.5,-6.3) -- (9.5,-3.7);
    
 \draw[thick, -] (6.5, -6) -- ++(1,0) -- ++(0 , 0.5 ) -- ++(-1, 0) -- ++(1 ,0 ) -- ++(0 , -0.25) -- ++(1 ,0 ) -- ++(0 ,1 ) -- ++(-1 ,0 ) -- ++(0 , -0.25 ) -- ++(-1 , 0) -- ++(1 ,0) -- ++(0,0.5) -- ++(-1,0) -- ++(1,0) -- ++(0,-0.25) -- ++(1,0) -- ++(0,-0.5) -- ++(0.5,0) -- ++(0,1) -- ++(-2.5,0) -- ++(2.5,0) -- ++(0,-0.5) -- ++(1,0)  ;
   
      \draw[fill=green!20, opacity=0.5, rounded corners] (6.25,-6.2) rectangle (9.25,-4.05);

    \node [below] at (10.5,-6.3) {$\delta$};
    \node [below] at (9.5,-6.3) {$\delta_1$};

       % Network N
    \node [blue vertex, minimum size = 4*\unit] at (1,1) (z) {};
    \path (z) ++ (-2,2) node {\blue{$N$}};
    \node [point, minimum size = 0.1*\unit] at ( 1.4,0.9) (1) {}; \node [point, minimum size = 0.1*\unit] at ( 1, 0) (2) {};
    \node [point, minimum size = 0.1*\unit] at (0.8,-0.5) (3) {}; \node [point, minimum size = 0.1*\unit] at ( 0.5,0.5) (4) {};
    \node [point, minimum size = 0.1*\unit] at (0.3, -0.3) (5) {}; \node [point, minimum size = 0.1*\unit] at ( -0.1,0.2) (6) {};
    \node [point, minimum size = 0.1*\unit] at (1.6, 2) (7) {}; \node [point, minimum size = 0.1*\unit] at ( 0.5, 1.3) (8) {};
    \node [point, minimum size = 0.1*\unit] at ( 0.9,2.5) (9) {}; \node [point, minimum size = 0.1*\unit] at ( 2.4, 1.7) (10) {}; 
    \node [point, minimum size = 0.1*\unit] at (2.2, 0.5) (11) {}; 
    \node [point, minimum size = 0.1*\unit] at (-0.3, 2.3) (12) {}; 
    \node [point, minimum size = 0.1*\unit] at (-0.4, 1.5) (13) {}; 
    \path [stealth-stealth] (5) edge (6)    (3) edge (4)   (2) edge (6) (4) edge (6);
    \path [stealth-stealth] (12) edge (9)    (13) edge (9);
     \path [stealth-stealth] (4) edge (1)  (4) edge (7) (8) edge (7) (8) edge (9) (8) edge (13) (1) edge (10) (2) edge (11)  (10) edge (11);

    \node [green vertex, opacity=0.5, minimum size = 1.6*\unit] at (0.5,0.05) (Nip) {};

    % Network N_1
    \node [green vertex, minimum size = 1.6*\unit] at (1,-5) (Ni) {};
    \path (Ni) ++ (-1.25,1.25) node {\green{$N_1^{\delta_1}$}};
    \node [point, minimum size = 0.1*\unit] at ( 1.5, -5) (2p) {};
    \node [point, minimum size = 0.1*\unit] at (1.3,-5.5) (3p) {}; \node [point, minimum size = 0.1*\unit] at ( 1,-4.5) (4p) {};
    \node [point, minimum size = 0.1*\unit] at (0.8, -5.3) (5p) {}; \node [point, minimum size = 0.1*\unit] at ( 0.4,-4.8) (6p) {};
           \path [stealth-stealth] (5p) edge (6p)    (3p) edge (4p)   (2p) edge (6p) (4p) edge (6p);

    % Application of Clustering Method
    
       \path [ultra thick, -stealth, color=red, above] (3.25,1) edge node {{$\ccalH$}} (6.25,1);
       \path [ultra thick, -stealth, color=red, above] (2.25,-5) edge node {{$\ccalH$}} (6.25,-5);
       
       \path [thick, -stealth, color=green!20, above, bend left=15] (Nip) edge node {{}} (Ni);
        
       \path [thick, -stealth, dashed, color=red, above, bend left=15] (Nip) edge node {{}} (6.25, -1);

\end{tikzpicture}
}
	\vspace{-0.1in}
	\caption{The clustering method $\ccalH$ is excisive. Given an arbitrary network $N$ (blue) the method $\ccalH$ outputs the dendrogram on the top right, where the green branch corresponds to the subnetwork $N_1^{\delta_1}$. If we consider the isolated subnetwork $N_1^{\delta_1}$ and apply $\ccalH$, excisiveness guarantees that the obtained dendrogram is equivalent to the green branch in the original one.}
	\label{fig_excisiveness_definition}
\end{figure}
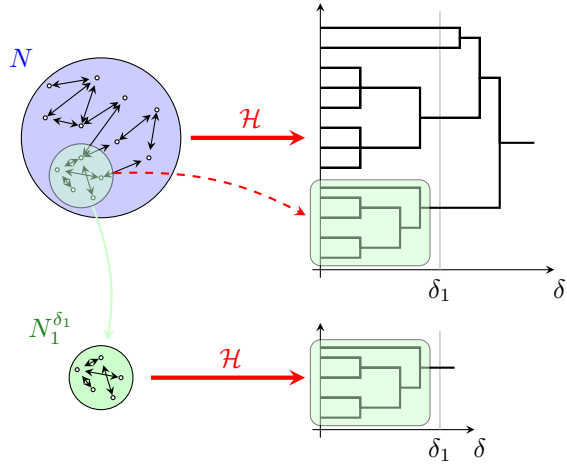
%%%%%%%%%%%%%%%%%%%%%%%%%%%%%%%%%%%%%%%%%%%%%%%%%%
Two related ultrametrics can be defined on the node set represented by any block $B_i(\delta)$. First, the result of restricting the output clustering ultrametric $u_X$ to $B_i(\delta)$. Second, the ultrametric obtained when applying the clustering method $\ccalH$ to the subnetwork $N^\delta_i$. If the two intervening ultrametrics are the same for every network $N$, all $i$, and all $\delta>0$, then we say that the method $\ccalH$ is excisive, as we formally define next.

\begin{indentedparagraph}{(P1) Excisiveness} We say that $\ccalH$ is \emph{excisive} if, for any arbitrary network $N$, for all subnetworks $N^\delta_i$ [cf.~\eqref{eqn_excisiveness_subnetworks}] at all resolutions $\delta > 0$ it holds that
	\begin{equation}\label{eqn_def_excisiveness}
	\ccalH\Big(N^\delta_i\Big) 
	= \left(B_i(\delta),\ u_X \big|_{B_i(\delta) \times B_i(\delta)}\right).
	\end{equation}
\end{indentedparagraph}

\smallskip \noindent The appeal of excisive methods is that they exhibit local consistency in the following sense.
For a given resolution $\delta$, when we cluster the subnetworks as defined in \eqref{eqn_excisiveness_subnetworks}, we obtain a dendrogram on the node set $B_i(\delta)$ for every $i$. Excisiveness ensures that when clustering the whole network and cutting the output dendrogram at resolution $\delta$, the branches obtained coincide with the previously computed dendrograms for every subnetwork; see Fig. \ref{fig_excisiveness_definition}. Our notion of excisiveness is inspired by \cite{CarlssonMemoli10}, where a related concept was analyzed for non-hierarchical clustering of finite metric spaces.

Excisiveness entails a tangible practical advantage when hierarchically clustering big data. 
In applications, one often begins by performing a coarse clustering at an exploratory phase. 
Notice that the computational cost of obtaining this coarse partition, which corresponds to \emph{one} particular resolution, is smaller than that of computing the whole dendrogram. 
After having done this, one focuses on relevant clusters -- via the subsequent application of the clustering method -- in order to reveal the whole hierarchical structure of this subset of the data. 
An excisive method guarantees that the result obtained from this two-step procedure coincides with the more computationally intensive clustering of the whole dataset. 
A specific example of this computational gain is presented next.

\begin{example}[Single linkage computation]\normalfont
	Focus on the application of single linkage hierarchical clustering to a finite metric space of $n$ points. Single linkage is an excisive clustering method as can be concluded by combining Proposition \ref{prop_reciprocal_nonreciprocal_excisive} below with the fact that, for finite metric spaces, reciprocal and nonreciprocal clustering coincide with single linkage (cf.~Theorem~\ref{theo_extremal_ultrametrics}). Consider two different ways of computing the output dendrogram for a subspace of the aforementioned finite metric space. The first approach is to hierarchically cluster the whole finite metric space and then extract the relevant branch. The computational cost of single linkage is equivalent to that of finding a minimum spanning tree in an undirected graph which, for a complete graph, is of cost $\mathcal{O}(n^2)$ \cite[Theorem 1.1]{chazelle_2000_minimum}.\footnote{Notice that the mentioned complexity omits an inverse Ackermann term which is small in practice. Also, randomized algorithms with expected complexity $\mathcal{O}(n^2)$ have been derived; see discussion in \cite{chazelle_2000_minimum}.} 
	The second approach consists of first obtaining the partition given by single linkage corresponding to \emph{one} coarse resolution. This is equivalent to finding the connected components in a graph where only the edges of weight smaller than the resolution are present. Assuming that the average degree of each node in this graph is $\alpha$, the computational cost of finding the connected components is $\mathcal{O}(\max(n, n \, \alpha/2)) =  \mathcal{O}(n \, \alpha/2)$ as long as $\alpha \geq 2$ \cite{HopcroftTarjan73}. After this, we pick the subspace of interest and find its minimum spanning tree. 
	Assuming that the subspace contains $\beta \, n$ nodes, the cost of finding the minimum spanning tree is $\mathcal{O}(\beta^2 n^2)$. 
	Consequently, the cost of the first approach is $\mathcal{O}(n^2)$ whereas the cost of the second one is $\mathcal{O}(n \, \alpha/2) + \mathcal{O}(\beta^2 n^2)$. This entails an asymptotic reduction of order $\beta^{-2}$. In the extreme case where $\beta = \beta_0/n$ so that the subspace of interest is independent of the size of the whole network, there is a reduction in computational complexity from quadratic to linear in $n$. Still, excisiveness ensures that the outputs of both approaches coincide, allowing us to follow the second -- more efficient -- approach.
\end{example}

Having established that excisiveness is a property of practical relevance, we seek to study its relation with the axiomatic approach reviewed in Section~\ref{subsec_admissible_hierarchical_clustering_algorithms}.
One can show that there exist clustering methods that, while satisfying axioms (A1)-(A2), are also excisive. Indeed, the reciprocal and nonreciprocal clustering methods introduced in Section \ref{subsec_admissible_hierarchical_clustering_algorithms} are excisive as we state next.

% P R O P O S I T I O N  %%%%%%%%%%%%%%%%%%%%%%%%%%%%%%%%%%%
\begin{proposition}\label{prop_reciprocal_nonreciprocal_excisive}
	The reciprocal $\ccalH^{\R}$ and nonreciprocal $\ccalH^{\NR}$ methods with output ultrametrics defined in \eqref{eqn_reciprocal_clustering} and \eqref{eqn_nonreciprocal_clustering}, respectively, are excisive as defined in (P1).
\end{proposition}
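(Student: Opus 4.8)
The plan is to prove \eqref{eqn_def_excisiveness} for both $\ccalH^{\R}$ and $\ccalH^{\NR}$ by establishing a single structural fact about blocks — a \emph{chain-closedness} property stating that the chains realizing the relevant ultrametric between two nodes of $B_i(\delta)$ may always be taken to lie entirely inside $B_i(\delta)$ — after which the equality follows from two opposing inequalities. Throughout, fix $N=(X,A_X)$ with output $(X,u_X)=\ccalH(N)$, a resolution $\delta>0$, and a block $B_i(\delta)$; write $\hat u$ for the ultrametric obtained by applying $\ccalH$ to the subnetwork $N^\delta_i$. The key starting observation is that $B_i(\delta)$ is the equivalence class of $\sim_{D_X(\delta)}$, so by \eqref{eqn_excisiveness_subnetworks_ultrametrics}, for a fixed $x\in B_i(\delta)$ and any $y\in X$ we have $y\in B_i(\delta)$ if and only if $u_X(x,y)\le\delta$. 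One of the two inequalities is then immediate and common to both methods: since $N^\delta_i$ is a subnetwork of $N$, every chain between two nodes of $B_i(\delta)$ that is available in $N^\delta_i$ is also available in $N$; because each ultrametric is obtained by minimizing a maximum link-cost over a pool of chains (for $\ccalH^{\NR}$, a directed minimum in each direction, combined by a maximum), shrinking the pool can only increase the value, giving $u_X(x,x')\le\hat u(x,x')$ for all $x,x'\in B_i(\delta)$.

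For reciprocal clustering the reverse inequality uses a simple prefix argument. Take $x,x'\in B_i(\delta)$ and an optimal chain $C$ realizing $u^{\R}_X(x,x')$ (which exists since $X$ is finite); as $u^{\R}_X(x,x')\le\delta$, every link of $C$ has symmetrized cost $\le\delta$. For any intermediate node $x_j$ of $C$, the prefix $[x,\dots,x_j]$ is a chain of cost $\le\delta$, so $u^{\R}_X(x,x_j)\le\delta$ and hence $x_j\in B_i(\delta)$. Thus $C$ lies entirely in $B_i(\delta)$, is admissible in $N^\delta_i$, and certifies $\hat u(x,x')\le u^{\R}_X(x,x')$. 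Combined with the first inequality this yields $\hat u = u^{\R}_X|_{B_i(\delta)\times B_i(\delta)}$, which is \eqref{eqn_def_excisiveness} for $\ccalH^{\R}$.

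For nonreciprocal clustering the same template applies, but the intermediate-node argument is subtler because the value is built from two \emph{independent} directed chains, and a forward prefix alone certifies only one direction. Given $x,x'\in B_i(\delta)$, pick optimal directed chains $\tilde C$ for $\tdu^*_X(x,x')$ and $\tilde C'$ for $\tdu^*_X(x',x)$, both of directed cost $\le\delta$. For an intermediate node $x_j$ of $\tilde C$, the prefix of $\tilde C$ gives $\tdu^*_X(x,x_j)\le\delta$, while concatenating the suffix of $\tilde C$ from $x_j$ to $x'$ with the whole of $\tilde C'$ from $x'$ back to $x$ gives a directed chain of cost $\le\delta$, so $\tdu^*_X(x_j,x)\le\delta$; hence $u^{\NR}_X(x,x_j)\le\delta$ and $x_j\in B_i(\delta)$. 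The analogous statement for intermediate nodes of $\tilde C'$ follows symmetrically (using $x'\in B_i(\delta)$ as the reference node). Consequently both optimal directed chains lie in $B_i(\delta)$, are admissible in $N^\delta_i$, and certify $\hat u(x,x')\le u^{\NR}_X(x,x')$, which with the first inequality gives \eqref{eqn_def_excisiveness} for $\ccalH^{\NR}$.

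I expect the main obstacle to be precisely this last concatenation step in the nonreciprocal case: verifying that the interior nodes of a directed optimal chain remain co-clustered with the endpoints requires exploiting the existence of the return chain, since nonreciprocal co-clustering is an inherently bidirectional condition that a one-sided prefix cannot establish on its own. Everything else — existence of optimal simple chains by finiteness, monotonicity of the value under shrinking the chain pool, and the identification of $B_i(\delta)$ with an ultrametric equivalence class — is routine.
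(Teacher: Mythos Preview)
Your proof is correct and follows essentially the same approach as the paper's: one inequality comes from the inclusion $N_i^\delta\hookrightarrow N$ (the paper phrases this via axiom (A2), you via monotonicity under shrinking the chain pool), and the reverse inequality comes from showing that an optimal chain for $u_X(x,x')$ stays inside $B_i(\delta)$ via a prefix argument. Your treatment of the nonreciprocal case is in fact more detailed than the paper's, which simply states that ``a similar argument can be used'' without spelling out the concatenation trick you correctly identify as the crux.
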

%%%%%%%%%%%%%%%
\begin{proof}
    Given an arbitrary network $N=(X, A_X)$, denote by $(X, u_X^\R) =\ccalH^\R(N)$ the output ultrametric when applying $\ccalH^\R$ to $N$. Pick an arbitrary resolution $\delta$ and focus on a subnetwork $N_i^\delta = (X_i^\delta, A_{X_i^\delta})$ as defined in~\eqref{eqn_excisiveness_subnetworks}. Denote by $(X_i^\delta, u^\R_{X_i^\delta})=\ccalH^\R(N_i^\delta)$ the clustering output when applying $\ccalH^\R$ to $N_i^\delta$. We want to show that 
	\begin{align}\label{eqn_proof_reciprocal_excisiveness_010}
		u^\R_{X_i^\delta} \equiv u_X^\R \big|_{X_i^\delta \times X_i^\delta}.
	\end{align}
	Since the network $N$, the resolution $\delta$, and the subnetwork index $i$ were chosen arbitrarily, \eqref{eqn_proof_reciprocal_excisiveness_010} would imply that the reciprocal clustering method $\ccalH^\R$ is excisive [cf. (P1)], as wanted.
	We first show that 
	\begin{align}\label{eqn_proof_reciprocal_excisiveness_020}
		u^\R_{X_i^\delta}(x, x') \geq u_X^\R(x, x'),
	\end{align}
	for all nodes $x, x' \in X_i^\delta$. Notice that the inclusion map $\phi: X_i^\delta \to X$ from network $N_i^\delta$ to $N$ such that $\phi(x)=x$ is a dissimilarity reducing map as defined in (A2). Hence, since $\ccalH^\R$ satisfies the Axiom of Transformation (A2), inequality \eqref{eqn_proof_reciprocal_excisiveness_020} must hold.
	In order to show the opposite inequality, pick arbitrary nodes $x, x' \in X_i^\delta$ and assume that
	\begin{align}\label{eqn_proof_reciprocal_excisiveness_030}
		u^\R_X(x, x') = \alpha.
	\end{align}
	From \eqref{eqn_excisiveness_subnetworks_ultrametrics}, we know that $\alpha \leq \delta$. From the definition of $\ccalH^\R$ in \eqref{eqn_reciprocal_clustering}, equality \eqref{eqn_proof_reciprocal_excisiveness_030} implies that there exists a chain $C(x, x')=[x=x_0, x_1, \ldots, x_l=x']$ where the maximum dissimilarity in both directions between consecutive nodes is $\alpha$. However, notice that part of this chain can be used to join any two nodes $x_j$ and $x_k$ where $j, k \in \{0, 1, \ldots , l\}$ with dissimilarities not larger than $\alpha$. This implies that $u^\R_X(x_j, x_k) = \alpha$ for $j, k \in \{0, 1, \ldots, l\}$ and from the definition of subnetwork [cf. \eqref{eqn_excisiveness_subnetworks_ultrametrics}] we must have that $x_j \in X_i^\delta$ for all $j \in \{0,1, \ldots, l\}$. Consequently, when applying the reciprocal clustering method $\ccalH^\R$ to $N_i^\delta$, the nodes in the chain $C(x, x')$ are contained in its node set $X_i^{\delta}$, allowing us to write [cf. \eqref{eqn_reciprocal_clustering}]
	\begin{align}\label{eqn_proof_reciprocal_excisiveness_050}
		u^\R_{X_i^\delta}(x, x') \leq \!\!  \max_{j | x_j\in C(x,x')}  \!\! \bbarA_{X_i^\delta}(x_j,x_{j+1}) = \alpha = u_X^\R(x, x'),
	\end{align}
	where the inequality comes from the fact that we picked one particular chain $C(x, x')$ instead of minimizing across all possible chains. Since $x, x' \in X_i^\delta$ were picked arbitrarily, \eqref{eqn_proof_reciprocal_excisiveness_050} implies that $u^\R_{X_i^\R}(x, x') \leq u_X^\R(x, x')$ for all $x, x' \in X_i^\delta$. Combining this inequality with \eqref{eqn_proof_reciprocal_excisiveness_020}, equivalence \eqref{eqn_proof_reciprocal_excisiveness_010} follows and we show excisiveness of $\ccalH^\R$.
	A similar argument can be used to show excisiveness of nonreciprocal clustering~$\ccalH^{\NR}$.
\end{proof}

Despite Proposition~\ref{prop_reciprocal_nonreciprocal_excisive}, \emph{excisiveness is not implied by admissibility} with respect to (A1) and (A2). To see this, consider the admissible semi-reciprocal clustering method $\ccalH^{\SR(t)}$ introduced in \cite{Carlssonetal13_2} and briefly explained next.

Semi-reciprocal clustering presents an intermediate behavior between reciprocal and nonreciprocal clustering. In reciprocal clustering, we minimize the cost of a chain in both directions simultaneously whereas in nonreciprocal clustering we minimize the cost in both directions separately. However, semi-reciprocal clustering adopts an intermediate position. In order to formalize this, we denote by $C_t(x,x')$ a chain starting at $x$ and finishing at $x'$ with at most $t$ nodes while we reserve the notation $C(x,x')$ to denote a chain linking $x$ with $x'$ with no maximum imposed on the number of nodes in the chain. Given a network $N=(X, A_X)$, define as $A^{\SR(t)}_X(x, x')$ the minimum cost incurred when traveling from node $x$ to node $x'$ using a chain of at most $t$ nodes. I.e.,
\begin{equation}\label{eqn_inter_cost}
A^{\SR(t)}_X(x, x') := \min_{C_t(x, x')} \,\,\,  \max_{i | x_i\in C_t(x, x')} A_X(x_i, x_{i+1}).
\end{equation}
The family of semi-reciprocal clustering methods $\ccalH^{\SR(t)}$ with output $(X,u^{\SR(t)}_X)=\ccalH^{\SR(t)}(N)$ is defined as 
\begin{align}\label{eqn_inter_reciprocal_clustering} 
u^{\SR(t)}_X(x,x') := \min_{C(x,x')} \,\,\, \max_{i | x_i\in C(x,x')} \bar{A}^{\SR(t)}_X(x_i, x_{i+1}),
\end{align}     
where the function $\bar{A}^{\SR(t)}_X(x_i, x_{i+1})$ is computed as follows
\begin{align}\label{eqn_inter_reciprocal_clustering_auxiliary}     
\bar{A}^{\SR(t)}_X(x_i, x_{i+1}) :=
\max \big(A^{\SR(t)}_X(x_i, x_{i+1}), A^{\SR(t)}_X(x_{i+1}, x_i)\big).
\end{align} 

We can interpret \eqref{eqn_inter_reciprocal_clustering} as the application of reciprocal clustering [cf. \eqref{eqn_reciprocal_clustering}] to a network with dissimilarities given by $A^{\SR(t)}_X$ in \eqref{eqn_inter_cost}, i.e., a network with dissimilarities given by the optimal choice of chains of constrained length $t$. Semi-reciprocal clustering methods satisfy axioms (A1)-(A2); see \cite[Prop. 4]{Carlssonetal13_2}.

To see that admissibility does not imply excisiveness, consider the network in Fig. \ref{fig_excisiveness_counter_example} and its dendrogram corresponding to the semi-reciprocal clustering method $\ccalH^{\SR(3)}$. For a resolution $\delta=1.5$, focus on the subnetwork $N^{1.5}_1=(\{x_1, x_3\}, A_{\{1,3\}})$ with $A_{\{1,3\}}(x_1, x_3)=A_{\{1,3\}}(x_3, x_1)=2$. When the clustering method $\ccalH^{\SR(3)}$ is applied to this subnetwork, the output dendrogram (red) differs from the corresponding branch in the original dendrogram (green). This counterexample shows that excisiveness cannot be derived from axioms (A1) and (A2).

%%%   F   I   G   U   R   E   %%%%%%%%%%%%%%%%%%%%%%%%%%%%%%%%%%%
%
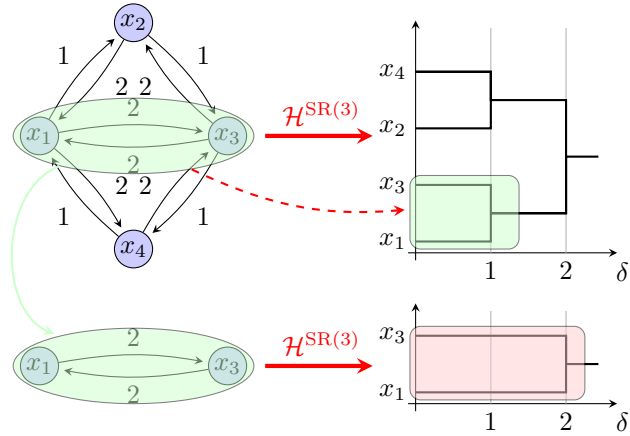
\begin{figure}
	\centering \def \thisplotscale {0.5}
\def \unit {\thisplotscale cm}

{\small
\begin{tikzpicture}[-stealth, shorten >=2, scale = \thisplotscale]

% First Network

    \node [blue vertex] at (-1,3.8) (1) {$x_2$};
    \node [blue vertex] at (1.5,0.8) (2) {$x_3$};    
    \node [blue vertex] at (-3.5,0.8) (3) {$x_1$};
    \node [blue vertex] at (-1,-2.2) (4) {$x_4$};

    \path (1) edge [bend left=10, above right] node {$1$} (2);	
    \path (2) edge [bend left=10, below] node {$2$} (3);
    \path (3) edge [bend left=10, above left] node {$1$} (1);    	

    \path (2) edge [bend left=10, below left, pos=0.7] node {$2$} (1);	
    \path (3) edge [bend left=10, above] node {$2$} (2);
    \path (1) edge [bend left=10, below right, pos=0.3]  node {$2$} (3);    	
    
    \path (3) edge [bend left=10, above right, pos=0.7] node {$2$} (4);    	
    \path (4) edge [bend left=10, below left] node {$1$} (3);	
    \path (4) edge [bend left=10, above left, pos=0.3] node {$2$} (2);
    \path (2) edge [bend left=10, below right]  node {$1$} (4);  
    
% First Dendrogram
   
   \draw [-stealth] (6.5,-2.5) -- (6.5,4);
    \draw [-stealth] (6.3,-2.3) -- (12,-2.3);
    \draw [-, draw=black!30] (8.5,-2.3) -- (8.5,3.8);
      \draw [-, draw=black!30] (10.5,-2.3) -- (10.5,3.8);
    
    \draw[thick, -] (6.5, -2) -- ++(2,0) -- ++(0,1.5) -- ++(-2,0) ++(0,1.5) -- ++(2,0) -- ++(0,1.5) -- ++(-2,0) -- ++(2,0) -- ++(0,-0.75) -- ++(2,0) -- ++(0,-3) -- ++(-2,0) -- ++(2,0)-- ++(0,1.5)-- ++(1,0);

    \node [below] at (12,-2.3) {$\delta$};
    \node [below] at (8.5,-2.3) {$1$};
    \node [below] at (10.5,-2.3) {$2$};
    \node [left] at (6.5,-2) {$x_1$};
    \node [left] at (6.5,-0.5) {$x_3$};
    \node [left] at (6.5,1) {$x_2$};
    \node [left] at (6.5,2.5) {$x_4$};
 
 % First Method   
       
     \path [ultra thick, -stealth, color=red, above] (2.5,0.8) edge node {{$\ccalH^{\SR(3)}$}} (5.5,0.8);
  
  % Green branch   
     \draw[fill=green!20, opacity=0.5, rounded corners] (6.35,-2.2) rectangle (9.25, -0.25);
 % First Subnetwork 
     \node at (-1, 0.8) [ellipse, minimum height=1cm,minimum width=3.2cm, fill=green!20, opacity=0.5, draw] (sn) {};
 % Clustering of First Subnetwork
       \path [thick, -stealth, dashed, color=red, above, bend right=15] (sn) edge node {{}} (6.35, -1.15);

      % Second Network
       
    \node [blue vertex] at (1.5,-5.3) (2p) {$x_3$};    
    \node [blue vertex] at (-3.5,-5.3) (3p) {$x_1$};

    \path (2p) edge [bend left=10, below] node {$2$} (3p);

    \path (3p) edge [bend left=10, above] node {$2$} (2p);
    
    % Second Dendrogram
    
   \draw [-stealth] (6.5,-6.5) -- (6.5,-3.5);
    \draw [-stealth] (6.3,-6.3) -- (12,-6.3);
    \draw [-, draw=black!30] (8.5,-6.3) -- (8.5,-3.7);
      \draw [-, draw=black!30] (10.5,-6.3) -- (10.5,-3.7);
    
    \draw[thick, -] (6.5, -6) -- ++(4,0) -- ++(0,1.5) -- ++(-4,0) ++(4,0) -- ++(0,-0.75) -- ++(1,0);

    \node [below] at (12,-6.3) {$\delta$};
    \node [below] at (8.5,-6.3) {$1$};
    \node [below] at (10.5,-6.3) {$2$};
    \node [left] at (6.5,-6) {$x_1$};
    \node [left] at (6.5,-4.5) {$x_3$};
    
    % Second Method
         \path [ultra thick, -stealth, color=red, above] (2.5,-5.3) edge node {{$\ccalH^{\SR(3)}$}} (5.5,-5.3);
    
    % Red Branch
      \draw[fill=red!20, opacity=0.5, rounded corners] (6.35,-6.2) rectangle (11, -4.25);
    % Second Subnetwork
     \node at (-1, -5.3) [ellipse, minimum height=1cm,minimum width=3.2cm, fill=green!20, opacity=0.5, draw] (sn2) {};
    % Map between subnetworks
       \path [thick, -stealth, color=green!20, above, bend right=68] (sn) edge node {{}} (sn2);

\end{tikzpicture}
}
	\vspace{-4mm}
	\caption{Admissibility does not imply excisiveness. The admissible method $\ccalH^{\SR(3)}$ does not satisfy the excisiveness condition since the green branch in the top dendrogram differs from the red branch in the lower one [cf. Fig. \ref{fig_excisiveness_definition}].}
	\label{fig_excisiveness_counter_example}
\end{figure}
%%%%%%%%%%%%%%%%%%%%%%%%%%%%%%%%%%%%%%%%%%%%%%%%%%

\subsection{Linear scale preservation}\label{Ss:linear_scale_preservation}

Consider a network $N_X=(X,A_X)$ and a linear function  $\psi:\reals_+\to\reals_+$ where $\psi(z)=\alpha \, z$, for some $\alpha  > 0$. Define the network $N_X^\psi:=(X, \psi \circ A_X)$ with the same set of nodes and linearly scaled dissimilarities.
With this notation in place, we formally define our second robustness property.

\begin{indentedparagraph}{(P2) Linear Scale Preservation}  We say that $\ccalH$ is \emph{linear scale preserving} if for any arbitrary network $N_X$ and function $\psi$ satisfying the above requirements, the outputs $(X,u_X):=\ccalH(N_X)$ and $(X, u_X^\psi):=\ccalH(N_X^\psi)$ satisfy
	\begin{equation}\label{eqn_linear_scale_preserving}
	u_X^\psi = \psi \circ u_X.
	\end{equation}
\end{indentedparagraph}

\smallskip\noindent For linear scale preserving methods, the ultrametric outcomes vary according to the same linear function that transforms the dissimilarity function. Hence, the hierarchical structure output by these methods is invariant with respect to units. In terms of dendrograms, linear scale preservation entails that a transformation of dissimilarities with an appropriate linear function $\psi$ results in a dendrogram where the order in which nodes are clustered together is the same while the resolution at which mergings occur changes linearly according to~$\psi$.

In practice, linear scale preservation is a desirable property. E.g., if we want to hierarchically cluster finite metric spaces -- which are particular cases of asymmetric networks where every dissimilarity is symmetric and the triangle inequality is satisfied -- the hierarchy of the output should not depend on the unit used to measure distances. Equivalently, the choice of units does not alter the nature of a given metric space, thus, if we measure distances in, e.g., meters or centimeters we should obtain the same structure when clustering both. Linear scale preserving methods guarantee this behavior for arbitrary asymmetric networks.

The reciprocal and nonreciprocal clustering methods introduced in Section~\ref{subsec_admissible_hierarchical_clustering_algorithms} are linear scale preserving.

%%%%%% P R O P O S I T I O N %%%%%%%%%%%%
\begin{proposition}\label{prop_reciprocal_nonreciprocal_linear_scale_preserving}
	The reciprocal $\ccalH^{\R}$ and nonreciprocal $\ccalH^{\NR}$ methods with output ultrametrics defined in \eqref{eqn_reciprocal_clustering} and \eqref{eqn_nonreciprocal_clustering}, respectively, are linear scale preserving as defined in (P2). 
\end{proposition}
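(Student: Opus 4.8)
The plan is to exploit the elementary fact that multiplication by a positive scalar commutes with the $\min$ and $\max$ operations out of which both reciprocal and nonreciprocal clustering are built. Since $\psi(z)=\alpha z$ with $\alpha>0$ is strictly increasing, for every finite set $S\subset\reals_+$ one has $\alpha\max(S)=\max(\alpha S)$ and $\alpha\min(S)=\min(\alpha S)$. This order-preservation is the only structural ingredient required, and I would isolate it as a one-line observation before treating the two methods in turn.

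First, for the reciprocal method $\ccalH^{\R}$, I would note that the symmetrized dissimilarity of the scaled network $N_X^\psi$ is exactly $\alpha$ times the original one, since $\max\big(\alpha A_X(x,x'),\,\alpha A_X(x',x)\big)=\alpha\,\bbarA_X(x,x')$. Substituting this into the definition \eqref{eqn_reciprocal_clustering} and pulling the factor $\alpha$ successively out of the inner $\max$ over the links of each chain and then out of the outer $\min$ over all chains $C(x,x')$ --- the collection of chains being identical for both networks, as they share the same node set $X$ --- yields $u_X^{\R,\psi}(x,x')=\alpha\,u_X^{\R}(x,x')=\psi\big(u_X^{\R}(x,x')\big)$, which is precisely \eqref{eqn_linear_scale_preserving}.

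Next, for the nonreciprocal method $\ccalH^{\NR}$, the argument is analogous but applied first to the directed minimum chain cost. Pulling $\alpha$ out of the nested $\min$--$\max$ in \eqref{eqn_nonreciprocal_chains} gives $\tdu^{*}_X(x,x')\mapsto\alpha\,\tdu^{*}_X(x,x')$ for every ordered pair on the scaled network, and then the outer maximum over the forward and backward costs in \eqref{eqn_nonreciprocal_clustering} again commutes with the scalar, delivering $u_X^{\NR,\psi}(x,x')=\psi\big(u_X^{\NR}(x,x')\big)$. Since $x,x'$ were arbitrary, linear scale preservation holds for both methods.

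There is no substantial obstacle in this proof; it is essentially a matter of bookkeeping the scalar through the definitions. The only point genuinely requiring care is the hypothesis $\alpha>0$: it is precisely the positivity, hence strict monotonicity, of $\psi$ that guarantees the $\min$ and $\max$ are preserved rather than interchanged, so I would flag explicitly where this is used rather than let it pass silently.
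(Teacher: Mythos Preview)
Your proof is correct and entirely self-contained: positive scaling commutes with $\min$ and $\max$, so the scalar $\alpha$ propagates through the nested extrema in \eqref{eqn_reciprocal_clustering}, \eqref{eqn_nonreciprocal_chains}, and \eqref{eqn_nonreciprocal_clustering} unchanged. There is nothing to add.

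The paper, however, takes a different route. Rather than arguing directly from the definitions, it defers the proof to the main structural result, Theorem~\ref{theo_representability_excisiveness}: one first shows that $\ccalH^{\R}$ and $\ccalH^{\NR}$ are representable (via the explicit collections $\omega_{\R}$ and $\Omega_{\NR}$ in Fig.~\ref{fig_reciprocal_nonreciprocal_representable}), and then invokes the implication ``representable $\Rightarrow$ linear scale preserving'' established in that theorem. Your argument is more elementary and avoids any forward reference; the paper's argument is more economical in the sense that once Theorem~\ref{theo_representability_excisiveness} is in hand, all three robustness properties for both methods follow at once without separate computations. Neither approach has a real advantage in difficulty here---the direct calculation is essentially trivial---but your version has the virtue of making the proposition logically independent of the later machinery.
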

%%%%%%%%%%%%%%%

\begin{proof}
    This proposition follows as a particular case of our main result (Theorem~\ref{theo_representability_excisiveness}, to be introduced in Section~\ref{subsec_generative_model}) after demonstrating that $\ccalH^\R$ and $\ccalH^\NR$ are representable methods; cf. Fig.~\ref{fig_reciprocal_nonreciprocal_representable} and associated discussion. Notice that Proposition~\ref{prop_reciprocal_nonreciprocal_excisive} can also be shown as a particular case of the more general Theorem~\ref{theo_representability_excisiveness}, but we decided to include its proof to demonstrate the technique to show one of these simpler results independently. Nonetheless, the current proof is omitted to avoid redundancy.
\end{proof}

Proposition \ref{prop_reciprocal_nonreciprocal_linear_scale_preserving} notwithstanding, linear scale preservation is a condition independent of axioms (A1) and (A2). This can be seen by analyzing the behavior of the admissible method $\ccalH^{\R/\NR}(\beta)$ introduced in \cite{Carlssonetal13_2} and briefly explained next.

The grafting clustering method $\ccalH^{\R/\NR}(\beta)$ is constructed by pasting branches of the nonreciprocal dendrogram into corresponding branches of the reciprocal dendrogram. To define this precisely, one computes the reciprocal and nonreciprocal dendrograms and cut all branches of the reciprocal dendrogram at resolution $\beta>0$. Then, replace the cut branches of the reciprocal tree by the corresponding branches -- i.e., those with the same leaves -- of the nonreciprocal tree. This hybrid dendrogram is the output of method $\ccalH^{\R/\NR}(\beta)$. In terms of ultrametrics, we can define this pasting formally as follows.
\begin{equation}\label{def_mu_beta_1}
u^{\R/\NR}_X(x,x';\beta) :=
\begin{cases}
u^{\NR}_X(x,x'), & \text{if } u^{\R}_X(x,x') \leq \beta, \\
u^{\R}_X(x,x'),  & \text{if } u^{\R}_X(x,x') >    \beta .
\end{cases}
\end{equation}
The ultrametric in \eqref{def_mu_beta_1} is valid and $\ccalH^{\R/\NR}(\beta)$ satisfies axioms (A1) and (A2); see \cite[Prop. 1]{Carlssonetal13_2}. 
However, the method $\ccalH^{\R/\NR}(\beta)$ is not linear scale preserving as can be seen from a simple counterexample. Consider the three-node network in Fig.~\ref{fig_scale_invariance_counter} as well as its transformation after applying the linear function $\psi(z)=2\,z$. The figure illustrates the fact that the reciprocal and nonreciprocal ultrametrics are transformed by $\psi$, as it should be given Proposition \ref{prop_reciprocal_nonreciprocal_linear_scale_preserving}. However, we see that the ultrametric output by $\ccalH^{\R/\NR}(\beta)$ (for $\beta=3)$ is multiplied by 4 instead of by 2, thus violating (P2).

%%%   F   I   G   U   R   E   %%%%%%%%%%%%%%%%%%%%%%%%%%%%%%%%%%%%%%%%%%%%%
%
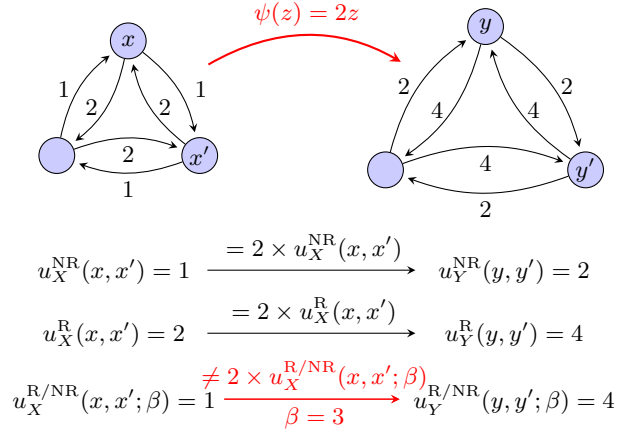
\begin{figure}
	\centering
	\def \thisplotscale {0.38}
\def \unit {\thisplotscale cm}

{\footnotesize
\begin{tikzpicture}[-stealth, shorten >=2, x = 1*\unit, y=1*\unit]

    % Draw bigger network
    \node [blue vertex, minimum size=1.25*\unit] at (12.0,5.0) (1) {$y$};
    \node [blue vertex, minimum size=1.25*\unit] at (15.5,  0) (2) {$y'$};    
    \node [blue vertex, minimum size=1.25*\unit] at (8.5,  0)  (3) {$$};
    
    \node [fill=white] at (9.5,  3.5) (4) {};

    \path (1) edge [bend left=20, right] node {{$2$}} (2);	
    \path (2) edge [bend left=20, below] node {{$2$}} (3);
    \path (3) edge [bend left=20, left] node  {{$2$}} (1);    	

    \path (2) edge [bend left=20, right] node {{$4$}} (1);	
    \path (3) edge [bend left=20, below] node {{$4$}} (2);
    \path (1) edge [bend left=20, left]  node {{$4$}} (3);    	
    
    % Draw smaller network
    \node [blue vertex, minimum size=1.25*\unit] at (-0.5,4.5) (1p) {$x$};
    \node [blue vertex, minimum size=1.25*\unit] at (2,0.5) (2p) {$x'$};    
    \node [blue vertex, minimum size=1.25*\unit] at ( -3,0.5) (3p) {$$};
    
    \node [fill=white] at ( 2,3.5) (4p) {};

    \path (1p) edge [bend left=20, right] node {{ $1$}} (2p);	
    \path (2p) edge [bend left=20, below] node {{$1$}} (3p);
    \path (3p) edge [bend left=20, left]  node {{$1$}} (1p);    	

    \path (2p) edge [bend left=20, right] node {{$2$}} (1p);	
    \path (3p) edge [bend left=20, below] node {{$2$}} (2p);
    \path (1p) edge [bend left=20, left]  node {{$2$}} (3p);    	
    
    % Connect networks with transformation \phi  	
    \path (4p) edge [above, red, thick, pos=0.5, bend left] node {$\psi (z) = 2z$} (4);

    % Connect R ultrametrics
    \path     (-1, -3.5)  node [minimum width = 6.5*\unit] (nrx) {$u_X^{\NR}(x,x')=1$}
           ++ (14,0)    node [minimum width = 6.5*\unit] (nry) {$u_Y^{\NR}(y,y')=2$};
    \path (nrx) edge [above] node {$= 2\times u_X^{\NR}(x,x')$} (nry);

    % Connect NR ultrametrics
    \path     (-1, -5.7) node (rx) [minimum width = 6.5*\unit] {$u_X^{\R}(x,x')=2$}
           ++ (14,0)   node (ry) [minimum width = 6.5*\unit]{$u_Y^{\R}(y,y')=4$};
    \path (rx) edge [above] node {$= 2\times u_X^{\R}(x,x')$} (ry);

    % Connect R/NR ultrametrics
    \path     (-1, -8.0) node (rx) [minimum width = 6.5*\unit] {$u_X^{\R/\NR}(x,x';\beta)=1$}
           ++ (14,0)   node (ry) [minimum width = 6.5*\unit]{$u_Y^{\R/\NR}(y,y';\beta)=4$};
    \path (rx) edge [above,red] node {$\neq 2\times u_X^{\R/\NR}(x,x';\beta)$} (ry);
    \path (rx) edge [below, red] node {$\beta=3$}(ry);

    % Connect R/NR ultrametrics
%    \node at (-1, -3)   {$u^{R/NR}_X(x_1, x_2; \beta)=1$};
%    \node at (5.5, 0)   {$\beta = 3$};
%    \node at (12.5, -3)  {$u^{R/NR}_Y(y_1, y_2; \beta)=4$};
%    \node at (2.3,-3) (5) {};
%    \node at (9.2,-3) (5p) {};
%    \path (5) edge [above, very thick, pos=0.5] node {$u^{R/NR}_Y \neq \psi(u^{R/NR}_X)$} (5p);

\end{tikzpicture}
}
	\caption{Admissibility does not imply linear scale preservation. Reciprocal and nonreciprocal clustering are linear scale preserving while $\ccalH^{\R/\NR}(\beta)$ is not.}
	\label{fig_scale_invariance_counter}
\end{figure}
%%%%%%%%%%%%%%%%%%%%%%%%%%%%%%%%%%%%%%%%%%%%%%%%%%%%%%%%%%%

%%%   S   E   C   T   I   O   N   %%%%%%%%%%%%%%%%%%%%%%%%%%%%%%%%%%%
\subsection{Stability}\label{Ss:stability} 

As a third important robustness property, we introduce the notion of stability.
This concept requires the definition of a metric $d_\ccalN$ between networks. 
This metric is a generalization of the Gromov-Hausdorff distance \cite[Chapter 7.3]{burago-book}, originally conceived as a metric between compact metric spaces, to the more general collection of networks $\ccalN$.

Whenever two networks $N_X$ and $N_Y$ are related by a simple redefinition of the node labels, we say that they are isomorphic and we write $N_X \cong N_Y$. 
The collection of networks where all isomorphic networks are represented by a single point is called the collection of networks modulo isomorphism and denoted as $\ccalN\mod\cong$.
For node sets $X$ and $Y$ consider subsets $R\subseteq X\times Y$ of the Cartesian product set $X\times Y$ with elements $(x,y)\in R$. The set $R$ is a \emph{correspondence} between $X$ and $Y$ if for all $x_0\in X$ we have at least one element $(x_0,y)\in R$ and for all $y_0\in Y$ we have at least one element $(x,y_0)\in R$. 
The metric $d_\ccalN$ between networks $N_X$ and $N_Y$ takes the value
\begin{align}\label{eqn_gh_distance}
	d_\ccalN(N_X,N_Y) := &\frac{1}{2}\min_{R}\!\max_{(x,y),(x',y')\in R} \big|A_X(x,x')-A_Y(y,y')\big|.
\end{align}
Definition \eqref{eqn_gh_distance} is a verbatim generalization of the Gromov-Hausdorff distance in \cite[Theorem 7.3.25]{burago-book} except that the dissimilarity functions $A_X$ and $A_Y$ are not restricted to be metrics. For this more general case, $d_\ccalN$ is still a legitimate metric in the space $\ccalN\mod\cong$; {see \cite[Supp. mats. A.4]{Carlssonetal14} for a proof of this fact. See \cite{pph} for applications to the stability of persistent homology over networks}. {The case of possibly \emph{infinite} networks was studied in \cite{cm17,cm18,chowdhury2017convergence}.\footnote{{These papers consider a notion of network $(X,A_X)$ a bit more general than the one we considered here: the authors do not require $A_X$ to satisfy any conditions except being real valued.}}}
With this definition in place, we formally introduce the property of stability.

\begin{indentedparagraph}{(P3) Stability} We say that $\ccalH$ is \emph{stable} if there exists a \emph{finite} constant $L=L(\mathcal{H})\geq 0$ such that, for any two networks $N_X$ and $N_Y$, it holds
	\begin{equation}
	d_{\mathcal{N}}\big(\mathcal{H}(N_X),\mathcal{H}(N_Y)\big)\leq L\cdot d_{\mathcal{N}}(N_X,N_Y).
	\end{equation}
\end{indentedparagraph}

\smallskip \noindent Stability ensures that small perturbations on a network result in small perturbations in the associated ultrametric. 
More precisely, perturbations of size at most $\varepsilon$  on a given network -- as measured by $d_{\mathcal{N}}$ -- result in perturbations in the clustering results which are bounded by $L\, \varepsilon$. 
In other words, every stable hierarchical clustering method $\mathcal{H}$ is \emph{Lipschitz} as a map from $(\mathcal{N},d_{\mathcal{N}})$ into itself, making them suitable for practical applications. {See \cite[Remark 17]{clust-um} for a discussion of the fact that average and complete linkage hierarchical methods are unstable when applied to finite metric spaces.}

Mimicking the developments in previous subsections, one can show that the reciprocal and non-reciprocal clustering methods are stable.

%%%%%% P R O P O S I T I O N %%%%%%%%%%%%
\begin{proposition}\label{prop_reciprocal_nonreciprocal_stable}
	The reciprocal $\ccalH^{\R}$ and nonreciprocal $\ccalH^{\NR}$ methods with output ultrametrics defined in \eqref{eqn_reciprocal_clustering} and \eqref{eqn_nonreciprocal_clustering}, respectively, are stable as defined in (P3). 
\end{proposition}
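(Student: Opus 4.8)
The plan is to show that both methods are in fact $1$-Lipschitz, i.e., that the constant $L=1$ works in (P3). Since all node sets are finite, there are only finitely many correspondences between any two networks, so the minimum in \eqref{eqn_gh_distance} is attained. I would begin by fixing networks $N_X$ and $N_Y$ together with an \emph{optimal} correspondence $R\subseteq X\times Y$, and set $\eta := 2\,d_\ccalN(N_X,N_Y) = \max_{(x,y),(x',y')\in R}\big|A_X(x,x')-A_Y(y,y')\big|$. The goal is then to prove that for every $(x,y),(x',y')\in R$ one has $|u_X(x,x')-u_Y(y,y')|\leq\eta$ for the relevant output ultrametric. Since $R$ is also a valid correspondence between the output ultrametric networks (the node sets are unchanged), this bound immediately yields $d_\ccalN\big(\ccalH(N_X),\ccalH(N_Y)\big)\leq\tfrac{1}{2}\eta = d_\ccalN(N_X,N_Y)$.

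The central mechanism is to transfer optimal chains across the correspondence. For the reciprocal method, fix $(x,y),(x',y')\in R$ and let $C(x,x')=[x=x_0,x_1,\ldots,x_l=x']$ be a chain attaining $u^{\R}_X(x,x')$ in \eqref{eqn_reciprocal_clustering}. Using that $R$ is a correspondence, I would choose, for each intermediate node $x_i$, some $y_i$ with $(x_i,y_i)\in R$, taking $y_0=y$ and $y_l=y'$ at the endpoints; this produces a chain $[y=y_0,\ldots,y_l=y']$ in $Y$. Since $(x_i,y_i),(x_{i+1},y_{i+1})\in R$, both the forward and backward dissimilarities along each link change by at most $\eta$, and because the $\max$ defining $\bbarA_Y$ is monotone this gives $\bbarA_Y(y_i,y_{i+1})\leq\bbarA_X(x_i,x_{i+1})+\eta$ for every $i$. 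Taking the maximum over links shows the $Y$-chain has cost at most $u^{\R}_X(x,x')+\eta$, and minimizing over all chains in $Y$ yields $u^{\R}_Y(y,y')\leq u^{\R}_X(x,x')+\eta$. Swapping the roles of $X$ and $Y$ gives the reverse inequality, so $|u^{\R}_X(x,x')-u^{\R}_Y(y,y')|\leq\eta$, as desired.

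For the nonreciprocal method I would run the same chain-transfer argument at the level of the \emph{directed} minimum chain cost $\tdu^{*}$ of \eqref{eqn_nonreciprocal_chains}: transferring an optimal directed chain from $x$ to $x'$ bounds $\tdu^{*}_Y(y,y')\leq\tdu^{*}_X(x,x')+\eta$, and symmetrically, so $|\tdu^{*}_X(x,x')-\tdu^{*}_Y(y,y')|\leq\eta$ and likewise for the reversed pair $(x',x)$. Since $u^{\NR}_X(x,x')=\max\big(\tdu^{*}_X(x,x'),\tdu^{*}_X(x',x)\big)$ and the map $(a,b)\mapsto\max(a,b)$ is $1$-Lipschitz in the sup norm, the two directed bounds combine to give $|u^{\NR}_X(x,x')-u^{\NR}_Y(y,y')|\leq\eta$. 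Both cases therefore satisfy (P3) with $L=1$.

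I do not expect a serious obstacle here; the argument is essentially the Lipschitz-stability template for single-linkage-type constructions. The only points needing care are (i) ensuring the transferred chain uses matched endpoints $y_0=y$, $y_l=y'$ while freely choosing matches for the interior nodes, which is exactly what the correspondence property guarantees, and (ii) verifying that the monotone $\max$ in the symmetrization $\bbarA$ (reciprocal case) and in the outer $\max$ (nonreciprocal case) preserves the per-link $\eta$-bound, which is immediate since $t\mapsto t+\eta$ commutes past $\max$. As a final remark, these stability claims could alternatively be deduced from the main Theorem~\ref{theo_representability_excisiveness} via representability, paralleling the proof of Proposition~\ref{prop_reciprocal_nonreciprocal_linear_scale_preserving}, but the direct argument above is short and self-contained.
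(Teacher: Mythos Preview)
Your proof is correct. The chain-transfer argument you outline is the standard stability template for single-linkage-type methods (see, e.g., \cite{clust-um}), and all the steps check out: the correspondence allows you to lift the minimizing chain node by node while fixing the endpoints, the per-link bound $\bbarA_Y(y_i,y_{i+1})\leq\bbarA_X(x_i,x_{i+1})+\eta$ follows from monotonicity of $\max$, and the nonreciprocal case goes through identically at the level of the directed chain cost $\tdu^{*}$. The conclusion $L=1$ is sharp.

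The paper takes a different route: it simply defers the proposition to Theorem~\ref{theo_representability_excisiveness}, noting that $\ccalH^{\R}$ and $\ccalH^{\NR}$ are representable (with the representer collections in Fig.~\ref{fig_reciprocal_nonreciprocal_representable}) and that representability implies stability. That general proof itself factors through Proposition~\ref{prop_equivalence_representable_single_linkage} and the known $1$-Lipschitz property of $\ccalH^{\SL}$, combined with showing $\Lambda^\Omega$ is Lipschitz with constant $\mathrm{sep}(\Omega)^{-1}$. Since both representer collections have $\mathrm{sep}(\Omega)=1$, the paper's route ultimately recovers the same constant $L=1$, but only after invoking the full machinery. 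Your direct argument is more elementary and self-contained, and you correctly flag at the end that the representability route is the alternative; the paper makes the opposite choice, preferring to exhibit these propositions as instances of the general theorem rather than prove them ad hoc.
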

%%%%%%%%%%%%%%%
\begin{proof}
    This proposition follows as a particular case of our main result (Theorem~\ref{theo_representability_excisiveness}, to be introduced in Section~\ref{subsec_generative_model}) after demonstrating that $\ccalH^\R$ and $\ccalH^\NR$ are representable methods; cf. Fig.~\ref{fig_reciprocal_nonreciprocal_representable} and associated discussion.
\end{proof}

We say that a clustering methods is \emph{robust if it satisfies the properties of excisiveness~(P1), linear scale preservation~(P2), and stability~(P3)}.
Given that robustness is an important practical feature, we want to characterize the family of robust admissible methods. 
From Propositions \ref{prop_reciprocal_nonreciprocal_excisive}, \ref{prop_reciprocal_nonreciprocal_linear_scale_preserving}, and~\ref{prop_reciprocal_nonreciprocal_stable} we know that reciprocal and nonreciprocal clustering belong to this family. 
Our objective is to find if other methods are contained within this family and, more importantly, to provide a comprehensive description of these. 
To this end, we introduce the concept of representability next.

%%%%%%%%%%%%%%%%%%%%%%%% R E P R E S E N T A B I L I T Y %%%%%%%%%%%%%%%%%%%%%%%%%%%%%%%%
\section{Representability}\label{sec_representability}

We define a representable hierarchical clustering method as one where the clustering of arbitrary networks is specified through the clustering of particular examples that we call \emph{representers}. 
Representers are {possibly} asymmetric networks $\omega=(X_\omega,A_\omega)$ with the distinction that the dissimilarity function $A_\omega$ need not be defined for all pairs of nodes, i.e., $\dom(A_\omega)\neq X_\omega\times X_\omega$.
In this sense, representers are more general objects than networks as introduced in Section~\ref{sec_preliminaries}.

Given an arbitrary network $N=(X, A_X)$, and a representer $\omega=(X_\omega, A_\omega)$, we define the \emph{expansion constant} of a map $\phi: X_\omega \to X$ from $\omega$ to $N$ as
\begin{equation}\label{eqn:def_lipschitz_constant}
L(\phi;\omega, N) := \max_{\substack{(z, z') \in \text{dom}(A_\omega) \\ z \neq z'}} \frac{A_X(\phi(z), \phi(z'))}{A_\omega(z, z')}.
\end{equation}
Notice that $L(\phi;\omega, N)$ is the minimum multiple of $\omega$ such that the map $\phi$ is dissimilarity reducing  as defined in (A2) from $L(\phi;\omega, N) * \omega$ to $N$. Notice as well that the maximum in \eqref{eqn:def_lipschitz_constant} is computed for pairs $(z, z')$ in the domain of $A_\omega$. Pairs not belonging to the domain can be mapped to any dissimilarity without modifying the value of the expansion constant. We define the optimal multiple $\lambda_X^{\omega}(x, x')$ between $x$ and $x'$ in $X$ with respect to $\omega$ as
\begin{equation}\label{eqn:multiple_as_lipschitz_constant}
\lambda_X^\omega(x, x') \! := \! \min \big\{ L(\phi;\omega, N) \,\, | \,\, \phi:X_\omega \to X, \,\, x, x' \in \text{Im}(\phi) \! \big\}.
\end{equation}
Equivalently, $\lambda_X^\omega(x, x')$ is the minimum expansion constant among those maps that have $x$ and $x'$ in their image. I.e., it is the minimum multiple needed for the existence of a dissimilarity reducing map from a multiple of $\omega$ to $N$ that has $x$ and $x'$ in its image.

We can now define the representable method $\ccalH^{\omega}$ associated with a given representer $\omega$ by defining the cost of a chain $C(x,x')=[x=x_0,\ldots, x_l=x']$ linking $x$ to $x'$ as the maximum optimal multiple $\lambda^{\omega}_X(x_i, x_{i+1})$ between consecutive nodes in the chain. The ultrametric $u^{\omega}_X$ associated with output $(X,u^{\omega}_X)=\ccalH^{\omega}(X,A_X)$ is given by the minimum chain cost
\begin{equation}\label{eqn_def_representability_single_network}
u^{\omega}_X(x, x') := 
\min_{C(x, x')} \ \max_{i | x_i \in C(x, x')} \lambda^{\omega}_X(x_i, x_{i+1}),
\end{equation}
for all $x, x' \in X$.
Representable methods are generalized to cases in which we are given a nonempty collection $\Omega$ of representers $\omega$. In such case, we define the function $\lambda^{\Omega}_X$ as
\begin{equation}\label{eqn_Omega_multiple_from_omega}
\lambda^{\Omega}_X(x, x') \ 
:=\ \inf_{\omega \in \Omega}\ \lambda^{\omega}_X(x, x'),
\end{equation}
for all $x, x' \in X$.
The value $\lambda^{\Omega}_X(x, x')$ is the infimum across all optimal multiples given by the different representers $\omega \in \Omega$. For a given network $N=(X, A_X)$, the representable clustering method $\ccalH^\Omega$ associated with the collection of representers $\Omega$ is the one with outputs $(X,u^{\Omega}_X)=\ccalH^{\Omega}(X,A_X)$ such that the ultrametric $u^{\Omega}_X$ is given by
\begin{equation}\label{eqn_def_represent_1}
u^{\Omega}_X(x, x')
:= \min_{C(x, x')}\ \max_{i | x_i \in C(x, x')} \lambda^{\Omega}_X(x_i, x_{i+1}),
\end{equation}
for all $x, x' \in X$. See Fig.~\ref{fig_representability} for an illustrative example.

%%%   F   I   G   U   R   E   %%%%%%%%%%%%%%%%%%%%%%%%%%%%%%%%%%%%%%%
%
\begin{figure}
	\centering
	\includegraphics[width=0.75\linewidth]{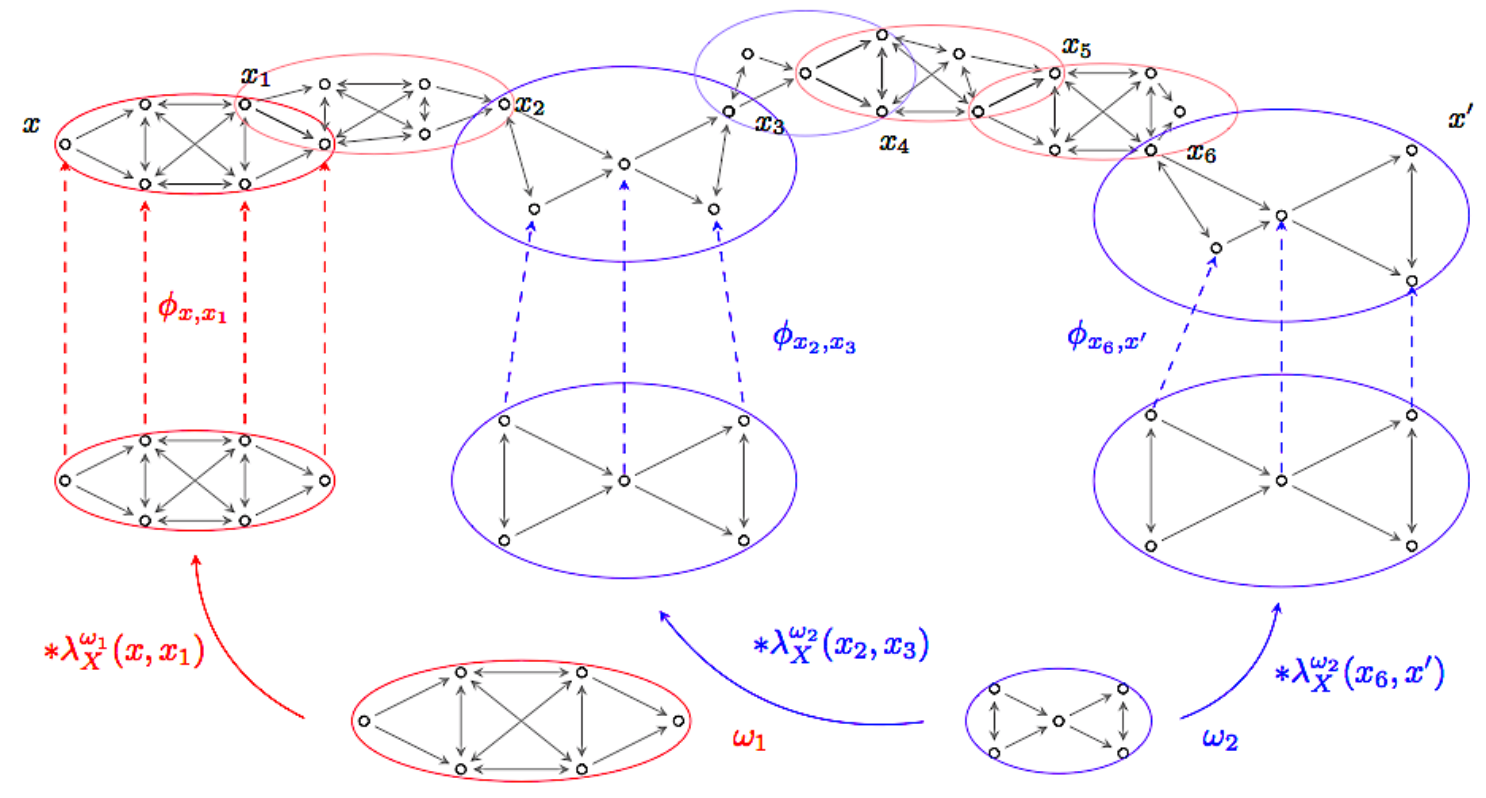}
	\caption{{Representable method $\ccalH^{\Omega}$ with ultrametric output as in~\eqref{eqn_def_represent_1}. The collection of representers $\Omega = \{\omega_1, \omega_2\}$ is shown at the bottom. In order to compute $u^{\Omega}_X(x, x')$ we link $x$ and $x'$ through a chain, e.g. $[x, x_1, \ldots, x_6, x']$ in the figure, and link pairs of consecutive nodes with multiples of the representers. The ultrametric value $u^{\Omega}_X(x, x')$ is given by minimizing over all paths joining $x$ and $x'$ the maximum multiple of a representer used to link consecutive nodes in the path [cf.~\eqref{eqn_def_represent_1}].}}
	\label{fig_representability}
\end{figure}
%%%%%%%%%%%%%%%%%%%%%%%%%%%%%%%%%%%%%%%

As we mentioned, not all dissimilarities are necessarily defined in representers. However, the issue of whether a representer is connected or not plays a prominent role in the validity and admissibility of representable methods. We say that a representer $\omega=(X_\omega, A_\omega)$ is \emph{weakly connected} if for every pair of nodes $z, z' \in X_\omega$ we can find a chain $C(z, z')=[z=z_0,\ldots, z_{l}=z']$ such that either $(z_i, z_{i+1}) \in \text{dom}(A_\omega)$ or $(z_{i+1},z_i) \in \text{dom}(A_\omega)$ or both for all $i=0,\ldots,l-1$. Moreover, we say that $\Omega$ is \emph{uniformly bounded} if and only if there exists a finite $M>0$ such that
\begin{equation}\label{eqn_def_uniform_bounded}
\max_{(z, z') \in \text{dom}(A_\omega)} A_\omega(z, z') \leq M,
\end{equation}
for all $\omega \in \Omega$.
For any representer $\omega$, let $\mathrm{sep}(\omega):= \min_{\substack{(z, z') \in \text{dom}(A_\omega)}} A_\omega(z,z')$ and, for a family $\Omega$ of representers, we define $\mathrm{sep}(\Omega):=\inf_{\omega\in\Omega}\mathrm{sep}(\omega)$.
We can now formally define the notion of representability.

\begin{indentedparagraph}{(P4) Representability} We say that a clustering method $\ccalH$ is \emph{representable} if there exists a uniformly bounded collection $\Omega$ of weakly connected representers each with a finite number of nodes and $\mathrm{sep}(\Omega)>0$ such that $\ccalH \equiv \ccalH^\Omega$ where $\ccalH^\Omega$ has output ultrametrics as in \eqref{eqn_def_represent_1}.\end{indentedparagraph}

\smallskip\noindent It can be shown that indeed under the conditions in (P4), \eqref{eqn_def_represent_1} defines a valid ultrametric, as stated next.\footnote{Longer proofs such as the one associated with this result have been deferred to the appendix.}

%%%%%% P R O P O S I T I O N %%%%%%%%%%%%%%%%%%%%%%%%
\begin{proposition}\label{prop_validity_repre}
	For every collection of representers $\Omega$ satisfying the conditions in (P4), \eqref{eqn_def_represent_1} defines a valid ultrametric.
\end{proposition}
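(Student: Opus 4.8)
The plan is to verify, for the function $u^\Omega_X$ defined by the minimum-chain-cost formula \eqref{eqn_def_represent_1}, the three defining properties of an ultrametric in \eqref{eqn_strong_triangle_inequality} --- symmetry, the identity property, and the strong triangle inequality --- after first establishing that $u^\Omega_X$ is everywhere finite. Since $X$ is finite and $A_X$ is a genuine dissimilarity, I set $A_{\min} := \min_{z \neq z'} A_X(z,z') > 0$ and $A_{\max} := \max_{z,z'} A_X(z,z') < \infty$, and I let $M$ denote the uniform bound from \eqref{eqn_def_uniform_bounded}. The entire argument reduces to two estimates on the optimal multiples $\lambda^\Omega_X$ of \eqref{eqn_Omega_multiple_from_omega}; once these are in hand, the passage to $u^\Omega_X$ is routine for minimum-chain-cost constructions.

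First I would record that $\lambda^\Omega_X$ is symmetric: the expansion constant $L(\phi;\omega,N)$ in \eqref{eqn:def_lipschitz_constant} does not depend on the order of the two distinguished nodes, and the membership constraint $x,x' \in \mathrm{Im}(\phi)$ defining $\lambda^\omega_X$ in \eqref{eqn:multiple_as_lipschitz_constant} is symmetric in $x,x'$; the infimum over $\omega$ in \eqref{eqn_Omega_multiple_from_omega} preserves symmetry. Consequently, reversing any chain $[x_0,\dots,x_l]$ leaves its cost $\max_i \lambda^\Omega_X(x_i,x_{i+1})$ unchanged, so the minima defining $u^\Omega_X(x,x')$ and $u^\Omega_X(x',x)$ coincide. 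The strong triangle inequality is equally structural: given a chain $C(x,x')$ and a chain $C(x',x'')$, their concatenation is a valid chain $C(x,x'')$ whose cost is the maximum of the two individual costs, so minimizing over all chains yields $u^\Omega_X(x,x'') \le \max\big(u^\Omega_X(x,x'),\, u^\Omega_X(x',x'')\big)$.

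The substantive content is the identity property, which I would split into finiteness and strict positivity, and which is exactly where the hypotheses in (P4) are consumed. For finiteness, given distinct $x,x'$ I choose a representer $\omega\in\Omega$ with at least two nodes and a map $\phi\colon X_\omega\to X$ sending two of them to $x$ and $x'$; every denominator in \eqref{eqn:def_lipschitz_constant} is at least $\sep(\Omega)>0$ while every numerator is at most $A_{\max}$, so $\lambda^\omega_X(x,x')\le A_{\max}/\sep(\Omega)<\infty$, whence $\lambda^\Omega_X(x,x')<\infty$ and the direct chain $[x,x']$ gives $u^\Omega_X(x,x')<\infty$; meanwhile $u^\Omega_X(x,x)=0$ via the trivial one-node chain. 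For strict positivity, I claim $\lambda^\Omega_X(z,z')\ge A_{\min}/M$ whenever $z\neq z'$. Indeed, any admissible $\phi$ has nodes $w\neq w'$ with $\phi(w)=z$, $\phi(w')=z'$; weak connectedness of $\omega$ supplies a chain from $w$ to $w'$, and since its endpoints have distinct images, some consecutive pair $(v_j,v_{j+1})$ (or its reverse) lies in $\dom(A_\omega)$ with $\phi(v_j)\neq\phi(v_{j+1})$. The corresponding term of the maximum in \eqref{eqn:def_lipschitz_constant} is then at least $A_{\min}/M$, so $L(\phi;\omega,N)\ge A_{\min}/M$; taking infima over $\phi$ and $\omega$ proves the claim. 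Finally, for $x\neq x'$ every chain from $x$ to $x'$ must contain a link with distinct endpoints, so its cost is at least $A_{\min}/M$, and hence $u^\Omega_X(x,x')\ge A_{\min}/M>0$.

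The step I expect to be the crux is this strict-positivity bound on $\lambda^\Omega_X$: it is the only place where weak connectedness and uniform boundedness are genuinely used, and the care required lies in arguing that an \emph{image-changing} link must occur along the connecting chain in $\omega$ and that this single link drives the expansion constant up irrespective of how the remaining nodes of $\omega$ are mapped. The companion bookkeeping --- symmetry, concatenation, and the finiteness estimate powered by $\sep(\Omega)>0$ and the finiteness of the node set --- is comparatively routine.
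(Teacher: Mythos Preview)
Your proof is correct and follows essentially the same route as the paper: symmetry from the unordered constraint $x,x'\in\text{Im}(\phi)$ in \eqref{eqn:multiple_as_lipschitz_constant}, the strong triangle inequality by chain concatenation, and strict positivity by using weak connectedness to locate an edge $(z,z')\in\dom(A_\omega)$ with $\phi(z)\neq\phi(z')$ (the paper isolates exactly this step as a separate claim) together with the uniform bound $M$ to obtain $\lambda^\Omega_X(x,x')\ge A_{\min}/M$. Your only addition is an explicit finiteness check for $u^\Omega_X$ via $\sep(\Omega)>0$, which the paper leaves implicit.
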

%%%%%%%%%%%%%%%%%

Representability allows the definition of universal hierarchical clustering methods from given representative examples. Every representer $\omega \in \Omega$ can be understood as defining a specific structure that can be considered as a cluster unit. The scaling of this cluster unit [cf. \eqref{eqn:multiple_as_lipschitz_constant}] and its replication throughout the network [cf. \eqref{eqn_def_representability_single_network}] signal the resolution at which nodes become part of the same cluster. For nodes $x$ and $x'$ to cluster together at resolution $\delta$, we need to construct a path from $x$ to $x'$ with overlapping versions of representers scaled by parameters not larger than $\delta$. When we have multiple representers, we can use any of them to build these chains [cf. \eqref{eqn_Omega_multiple_from_omega} and \eqref{eqn_def_represent_1}].

Although seemingly unrelated, the property of representability (P4) is tightly related to the more practical requirements of excisiveness (P1), linear scale preservation (P2), and stability (P3), as will be formally shown in Section~\ref{subsec_generative_model}.

%%%%%%%%%% F A C T O R I Z A T I O N %%%%%%%%%%
\subsection{Factorization of representable methods}\label{sec_representability_and_single_linkage}

The following factorization property for representable methods has practical value in itself and will be instrumental to show our main result in Theorem~\ref{theo_representability_excisiveness}.
Every representable clustering method factors into the composition of two maps: a symmetrizing map that depends on $\Omega$ followed by single linkage hierarchical clustering. This is formally stated next.

%%%   P   R   O   P   O   S   I   T   I   O   N   %%%%%%%%%%%%%%%%%%%%%%%%%%%%%%%%%%%%%%%%%%%%%%
%
\begin{proposition}\label{prop_equivalence_representable_single_linkage}
	Every representable clustering method $\ccalH^\Omega$ admits a decomposition of the form   $\ccalH^\Omega \equiv \ccalH^{\SL} \circ \Lambda^\Omega$, where $\Lambda^\Omega: \ccalN \to \ccalN^\mathrm{sym}$ is a map from the collection of asymmetric networks $\ccalN$ to that of symmetric networks $\ccalN^\mathrm{sym}$ and $\ccalH^{\SL}:\ccalN^\mathrm{sym} \to \ccalU$ is the single linkage clustering method for symmetric networks.
\end{proposition}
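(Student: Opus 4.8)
The plan is to exhibit the symmetrizing map explicitly and then observe that the defining formula \eqref{eqn_def_represent_1} is \emph{verbatim} the single linkage formula applied to the symmetrized dissimilarities. Concretely, I would define $\Lambda^\Omega$ to send a network $N=(X,A_X)$ to the symmetric network $(X,\lambda^\Omega_X)$, where $\lambda^\Omega_X$ is the function from \eqref{eqn_Omega_multiple_from_omega}, with the convention $\lambda^\Omega_X(x,x):=0$ on the diagonal. Granting for the moment that $\lambda^\Omega_X$ is a bona fide symmetric dissimilarity function, the single linkage output on $(X,\lambda^\Omega_X)$ is $\min_{C(x,x')}\max_{i}\lambda^\Omega_X(x_i,x_{i+1})$, which is identical term-by-term to \eqref{eqn_def_represent_1}. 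Since minimum-cost chains may always be taken to have distinct consecutive nodes, the diagonal convention is immaterial to the optimization, and the equivalence $\ccalH^\Omega\equiv\ccalH^{\SL}\circ\Lambda^\Omega$ follows at once, \emph{provided} the well-definedness of $\Lambda^\Omega$ is established.

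Hence the real work is to verify that $\Lambda^\Omega(N)=(X,\lambda^\Omega_X)$ is a legitimate symmetric network, i.e.\ that $\lambda^\Omega_X$ is symmetric, non-negative, finite, and vanishes exactly on the diagonal, so that it lands in $\ccalN^\mathrm{sym}$ and $\ccalH^{\SL}$ is applicable. Symmetry is immediate: the constraint ``$x,x'\in\mathrm{Im}(\phi)$'' in \eqref{eqn:multiple_as_lipschitz_constant} is symmetric in $x$ and $x'$, so $\lambda^\omega_X(x,x')=\lambda^\omega_X(x',x)$ for every $\omega$, and taking the infimum over $\omega\in\Omega$ in \eqref{eqn_Omega_multiple_from_omega} preserves this. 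Non-negativity is clear because the expansion constant in \eqref{eqn:def_lipschitz_constant} is a maximum of ratios of non-negative quantities. Finiteness follows since any weakly connected representer with at least two nodes admits a map $\phi$ with $x,x'\in\mathrm{Im}(\phi)$, and as $X_\omega$ is finite, $\mathrm{sep}(\Omega)>0$ bounds the denominators away from $0$ while $A_X$ takes finite values on the finite set $X$; thus $L(\phi;\omega,N)<\infty$.

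The crux is the off-diagonal positivity $\lambda^\Omega_X(x,x')>0$ for $x\neq x'$, and this is precisely where the hypotheses of (P4) are used; it is the main obstacle of the argument. Fix $x\neq x'$ and any admissible map $\phi$ with $x,x'\in\mathrm{Im}(\phi)$. Since $\omega$ is weakly connected, there is a chain in $\omega$ joining a preimage of $x$ to a preimage of $x'$ whose consecutive pairs lie, in one orientation or the other, in $\mathrm{dom}(A_\omega)$; because the endpoints map to the distinct points $x\neq x'$, at least one such pair $(z,z')$ satisfies $\phi(z)\neq\phi(z')$. For that pair the corresponding ratio in \eqref{eqn:def_lipschitz_constant} is at least $\varepsilon_X/M$, where $\varepsilon_X:=\min_{y\neq y'}A_X(y,y')>0$ (positive since $X$ is finite and $A_X$ is positive off the diagonal) and $M$ is the uniform bound from \eqref{eqn_def_uniform_bounded}. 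Therefore $L(\phi;\omega,N)\geq\varepsilon_X/M$ for every admissible $\phi$ and every $\omega\in\Omega$, whence $\lambda^\Omega_X(x,x')\geq\varepsilon_X/M>0$. This uniform lower bound is exactly what makes the infimum over $\Omega$ strictly positive; it is the same estimate underlying the validity asserted in Proposition~\ref{prop_validity_repre}, so one may alternatively invoke that result. With $\lambda^\Omega_X$ confirmed to be a symmetric dissimilarity function, $\Lambda^\Omega$ is well defined and the factorization $\ccalH^\Omega\equiv\ccalH^{\SL}\circ\Lambda^\Omega$ is complete.
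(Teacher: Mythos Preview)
Your proof is correct and follows essentially the same approach as the paper: define $\Lambda^\Omega(N)=(X,\lambda^\Omega_X)$, verify that $\lambda^\Omega_X$ is a valid symmetric dissimilarity, and observe that \eqref{eqn_def_represent_1} is literally the single linkage formula on $(X,\lambda^\Omega_X)$. The paper's proof is terser only because it outsources the well-definedness of $\lambda^\Omega_X$ to the proof of Proposition~\ref{prop_validity_repre}, whose argument (weak connectedness forces some pair $(z,z')\in\mathrm{dom}(A_\omega)$ with $\phi(z)\neq\phi(z')$, then the uniform bound $M$ and $\mathrm{sep}(X,A_X)$ give the positive lower bound) you have reproduced essentially verbatim; one cosmetic remark is that $\lambda^\Omega_X(x,x)=0$ is not a convention but follows directly from \eqref{eqn:def_lipschitz_constant}--\eqref{eqn:multiple_as_lipschitz_constant} by taking the constant map $\phi\equiv x$.
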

\begin{proof}
    The proof is just a matter of identifying elements in \eqref{eqn_def_represent_1}. Define the function $\Lambda^\Omega$ as the one that maps the network $N=(X, A_X)$ into $\Lambda^\Omega(X,A_X) = (X, \lambda^{\Omega}_X)$, where the dissimilarity function $\lambda^{\Omega}_X$ has values given by \eqref{eqn_Omega_multiple_from_omega}. That $(X, \lambda^{\Omega}_X)$ is a symmetric network -- i.e., that $\lambda^{\Omega}_X$ satisfies symmetry and identity -- is shown in the proof of Proposition~\ref{prop_validity_repre}. Comparing the definitions of the output ultrametrics of the representable method $\ccalH^\Omega$ in \eqref{eqn_def_represent_1} and of single linkage method in Section~\ref{sec_preliminaries}, we conclude that
\begin{equation}\label{eqn_equivalence_representable_single_linkage}
\ccalH^\Omega(X, A_X)
=\ccalH^{\SL}(X, \lambda^{\Omega}_X)
= \ccalH^{\SL} \big( \Lambda^\Omega (X, A_X) \big),
\end{equation}
as wanted. 
\end{proof}

        %%%   F   I   G   U   R   E   %%%%%%%%%%%%%%%%%%%%%%%%%%%%%%%%%%%%%%%
%
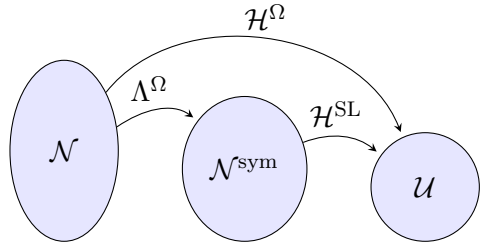
\begin{figure}
	\centering
	\def \thisplotscale {0.48}
\def \unit {\thisplotscale cm}

\begin{tikzpicture}[-stealth, shorten >=2, x = 1*\unit, y=1*\unit]

    % Draw space blurbs
    \path (0,0) node [draw, fill = blue!10, ellipse, 
                     minimum width = 3*\unit, 
                     minimum height = 5*\unit] (asymmetric networks) {$\ccalN$}
                  ;
    \path (5,-0.5) node [draw, fill = blue!10, ellipse, 
                     minimum width = 3*\unit, 
                     minimum height = 4*\unit] (symmetric networks) {$\ccalN^\mathrm{sym}$};
    \path (10,-1) node [draw, fill = blue!10, ellipse, 
                     minimum width = 3*\unit, 
                     minimum height = 3*\unit] (ultrametrics) {$\ccalU$};

    % Draw arrows
    \path[-stealth] (asymmetric networks)
                    edge [bend left, above, 
                    pos=0.4, 
                    out=30, 
                    in = 140] node {$\Lambda^\Omega$} 
                    (symmetric networks);
    \path[-stealth] (symmetric networks)
                    edge [bend left, above, 
                    pos=0.4, 
                    out=30, 
                    in = 150] node {$\ccalH^{\SL}$} 
                    (ultrametrics); 
    \path[-stealth] (asymmetric networks)
                    edge [bend left, above, 
                    pos=0.5, 
                    out=60, 
                    in = 120] node {$\ccalH^{\Omega}$} 
                    (ultrametrics);

\end{tikzpicture}
	\caption{{Decomposition of representable methods. A representable method can be decomposed into a map from the collection of asymmetric networks to the collection of symmetric networks composed with the single linkage map into the collection of ultrametrics. See Proposition \ref{prop_equivalence_representable_single_linkage}.}}
	\label{fig_decomposition_of_representable_methods}
\end{figure}
%%%%%%%%%%%%%%%%%%%%%%%%%%%%%%%%%%%%%%%

Representable clustering methods, as all other hierarchical clustering methods, are maps from the collection of asymmetric networks $\ccalN$ to the collection of ultrametrics $\ccalU$; see Fig. \ref{fig_decomposition_of_representable_methods}. Proposition \ref{prop_equivalence_representable_single_linkage} allows the decomposition of these maps into two components with definite separate roles. The first element of the composition is the function $\Lambda^\Omega$ whose objective is to symmetrize the original, possibly asymmetric, dissimilarity function. This transformation is followed by an application of single linkage $\ccalH^{\SL}$ with the goal of inducing an ultrametric structure on this symmetric, but not necessarily ultrametric, intermediate network. Proposition \ref{prop_equivalence_representable_single_linkage} attests that there may be many different ways of inducing a symmetric structure depending on the selection of the representers in $\Omega$ but that there is a unique method to induce ultrametric structure. This unique method is single linkage hierarchical clustering.  

From an algorithmic perspective, Proposition~\ref{prop_equivalence_representable_single_linkage} implies that the computation of ultrametrics arising from representable methods requires a symmetrizing operation that depends on $\Omega$ followed by application of a single linkage algorithm; see, e.g., \cite{Gabowetal86}. 
A related decomposition result is derived in \cite[Theorem 6.3]{CarlssonMemoli10} for clustering in metric spaces. 
Proposition \ref{prop_equivalence_representable_single_linkage} is a significant extension of this result which applies not only to finite metric spaces but also to asymmetric networks in general. 
For the case of metric spaces, when sensitivity to density might be a desirable property, suitable choices of representers $\Omega$ are known to induce this behavior; see~\cite[Section 6.7 and Definition 7.2]{CarlssonMemoli10}.

The converse of Proposition~\ref{prop_equivalence_representable_single_linkage} is not true, i.e., the composition of \emph{any} symmetrizing map followed by single linkage need not correspond to a representable method for some family of representers $\Omega$. 
To see this, consider the grafting method $\ccalH^{\R/\NR}(\beta)$ introduced in Section~\ref{Ss:linear_scale_preservation}. 
We can think of the application of $\ccalH^{\R/\NR}(\beta)$ as a symmetrizing map ($\ccalH^{\R/\NR}(\beta)$ itself) followed by single linkage.
In this case, the application of single linkage would be moot, since the image of $\ccalH^{\R/\NR}(\beta)$ is already an ultrametric (this follows from the fact that single linkage attains the maximal sub-dominant ultrametric~\cite[Corollary 14]{clust-um}).
Thus, we have argued that $\ccalH^{\R/\NR}(\beta)$ can be decomposed as in Fig.~\ref{fig_decomposition_of_representable_methods}, however $\ccalH^{\R/\NR}(\beta)$ is not representable since we have shown that it is not linear scale preserving and that would violate our main result in Theorem~\ref{theo_representability_excisiveness}.

\section{A generative model for robust hierarchical clustering methods}\label{subsec_generative_model}

Our main theorem establishes the equivalence between the classes of representable and robust hierarchical clustering methods.

%%%   T   H   E   O   R   E   M   %%%%%%%%%%%%%%%%%%%%%%%%%%%%%%%
%
\begin{theorem}\label{theo_representability_excisiveness}
	Given an admissible hierarchical clustering method $\ccalH$, it is robust (P1), (P2), (P3) if and only if it is representable (P4).
\end{theorem}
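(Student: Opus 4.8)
The plan is to prove the two implications separately, leaning on the factorization $\ccalH^\Omega \equiv \ccalH^{\SL}\circ\Lambda^\Omega$ from Proposition~\ref{prop_equivalence_representable_single_linkage} for the implication ``representable $\Rightarrow$ robust'', and on a direct construction of a representer collection for the converse.

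First I would treat the implication that representability implies robustness, verifying each property through the factorization. For (P2), note that replacing $A_X$ by $\alpha A_X$ multiplies every expansion constant $L(\phi;\omega,N)$ in \eqref{eqn:def_lipschitz_constant} by $\alpha$, hence $\lambda^\Omega_X$ scales by $\alpha$; since single linkage is trivially linear scale preserving, so is $\ccalH^\Omega$. For (P3), I would show $\Lambda^\Omega$ is Lipschitz with constant controlled by $1/\sep(\Omega)$ (finite because $\sep(\Omega)>0$): a perturbation of size $\varepsilon$ in the dissimilarities moves each ratio $A_X(\phi(z),\phi(z'))/A_\omega(z,z')$ by at most $\varepsilon/\sep(\Omega)$, so $\lambda^\Omega$ moves by at most that amount, and composing with the $1$-Lipschitz single-linkage map yields stability. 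For (P1), I would mimic the proof of Proposition~\ref{prop_reciprocal_nonreciprocal_excisive}: if $u^\Omega_X(x,x')\le\delta$, the witnessing chain together with the images of the representers realizing the optimal multiples all lie inside the block $B_i(\delta)$, since every node so reached is at $u^\Omega_X$-distance at most $\delta$ from $x$; the same chain is then available in the subnetwork $N^\delta_i$, and combined with the easy inequality from the Axiom of Transformation this gives \eqref{eqn_def_excisiveness}.

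For the converse, given an admissible robust $\ccalH$, I would construct $\Omega$ from the local merging patterns of $\ccalH$ normalized to unit resolution. Concretely, for every network $N$, resolution $\delta>0$, and block $B=B_i(\delta)$ of the output dendrogram, I would form the rescaled subnetwork and restrict its domain to the sufficiently small edges, producing a candidate representer $\omega$ on node set $B$. Linear scale preservation together with excisiveness \eqref{eqn_def_excisiveness} guarantees that each such $\omega$ merges its nodes at resolution at most $1$, which is exactly what is needed to force, via the Axiom of Transformation, the inequality $u_X\le\lambda^\Omega_X$ and hence, by the strong triangle inequality for $u_X$, the bound $u_X\le u^\Omega_X$. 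Taking the inclusion $B\hookrightarrow X$ as a witnessing map shows $\lambda^\Omega_X(x,x')\le u_X(x,x')$ for every pair, giving the reverse inequality $u^\Omega_X\le u_X$; together these yield $\ccalH\equiv\ccalH^\Omega$.

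The delicate point --- and the main obstacle --- is verifying that $\Omega$ can be chosen to satisfy the regularity conditions of (P4): uniform boundedness \eqref{eqn_def_uniform_bounded} and $\sep(\Omega)>0$. The naive representers $(B,A_X|_B/\delta)$ can carry arbitrarily large dissimilarities (a direct edge inside a block may be huge even when the block merges at $\delta$), so uniform boundedness fails outright. The remedy is to discard from each representer's domain every edge exceeding a universal threshold $M=M(L)$ fixed by the Lipschitz constant $L$ of $\ccalH$, retaining only the chains of small edges that actually drive the merging, and this is precisely where \emph{stability is indispensable}: if an unboundedly large dissimilarity were essential to cluster a block, contracting it slightly would produce an arbitrarily large change in the output ultrametric, contradicting the Lipschitz bound of (P3). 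I expect reconciling exact representation (both inequalities above) with a collection of bounded, weakly connected, positively separated representers to be the principal technical hurdle, whereas the forward implication and properties (P2)--(P3) for representable methods are comparatively routine.
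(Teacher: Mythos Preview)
Your forward direction (representable $\Rightarrow$ robust) is essentially identical to the paper's argument, including the use of the factorization $\ccalH^\Omega = \ccalH^{\SL}\circ\Lambda^\Omega$ for stability with constant $(\sep(\Omega))^{-1}$ and the chain--plus--image argument for excisiveness.

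For the converse, the paper uses the same basic construction you propose, but in its \emph{unmodified} form: the representers are the full subnetworks $N^\delta_i$ (complete domain, no edges discarded), each rescaled by the reciprocal of $\max_{x,x'\in B_i(\delta)} u_X(x,x')$, taken over all networks, resolutions, and blocks. The two inequalities $u^\Omega_X\le u_X$ and $u_X\le u^\Omega_X$ are then established exactly as you outline, using only admissibility, excisiveness (P1), and linear scale preservation (P2); stability (P3) is never invoked in the paper's converse argument.

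Your concern about the regularity conditions in (P4) is entirely legitimate, and it is a point the paper simply does not address: the published proof never checks that the constructed $\Omega$ is uniformly bounded or that $\sep(\Omega)>0$, and indeed the collection of full-domain rescaled subnetworks need not satisfy either condition (a block merging at resolution $\delta$ can contain raw dissimilarities that are arbitrarily large or arbitrarily small relative to $\delta$). Your diagnosis that stability ought to be the tool that enforces these bounds is reasonable, and your proposed remedy of pruning large edges from each representer's domain is a natural idea; but as you correctly anticipate, once the domains are partial the inequality $u_X\le u^\Omega_X$ needs a new argument, since (A2), (P1), and (P2) apply only to full networks. The paper offers no guidance here---you have identified a genuine gap that the published proof glosses over, and your proposal is, on this point, more careful than the paper itself.
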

%%%%%%%%%%%%%%%%%%%%%%%%%%%%%%%%%%%%%%%%%%%%%%%%

Intuitively, the relationship between representability and robustness stated in Theorem \ref{theo_representability_excisiveness} originates from the fact that both concepts address the locality of clustering methods. 
Representability (P4) implies that the method can be interpreted as an extension of particular cases or representers. 
In a related fashion, excisiveness (P1) requires the clustering of local subnetworks to be consistent with the clustering of the entire network. 

%%%   F   I   G   U   R   E   %%%%%%%%%%%%%%%%%%%%%%%%%%%%%%%%%%%%%%%%%%%%%
%
\begin{figure}
	\centering
	\includegraphics[width=0.75\linewidth]{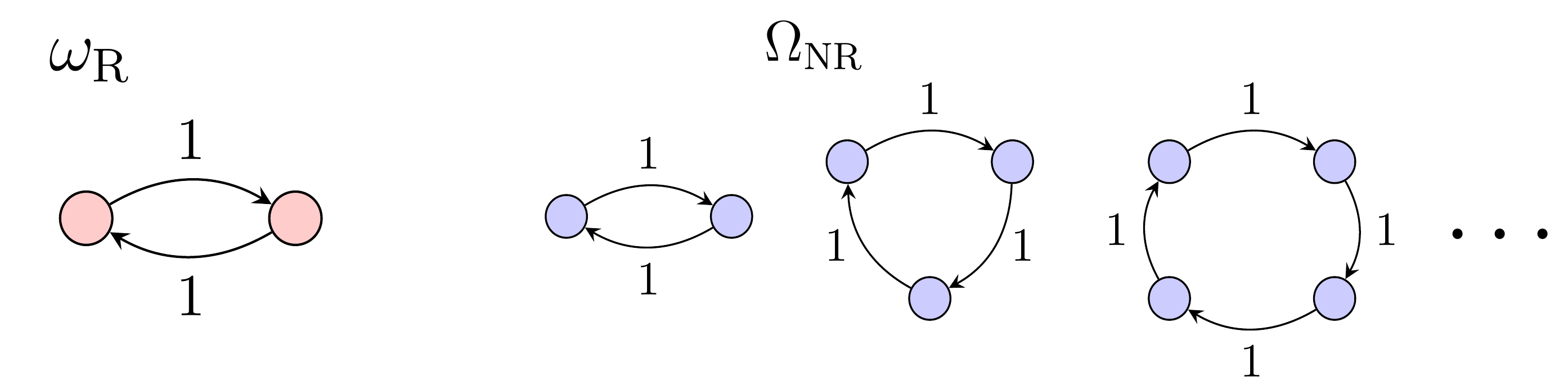}
	\vspace{-4mm}
	\caption{{$\ccalH^{\R}$ can be represented by one representer $\omega_\R$ while $\ccalH^\NR$ requires a countably infinite collection $\Omega_{\NR}$ of representers.}}
	\label{fig_reciprocal_nonreciprocal_representable}
\end{figure}
%%%%%%%%%%%%%%%%%%%%%%%%%%%%%%%

The importance of Theorem \ref{theo_representability_excisiveness} resides in relating implicit properties of a clustering method with practical relevance such as linear scale preservation with a generative model of clustering methods such as representability. 
Thus, when designing a clustering method for a particular application, if robustness is a desirable property, then Theorem \ref{theo_representability_excisiveness} asserts that representability must be considered as a generative model. Conversely, it is unclear how to establish directly whether a given clustering method is representable. However, Theorem \ref{theo_representability_excisiveness} provides an indirect way to prove representability via the analysis of excisiveness, linear scale preservation, and stability.

In Section~\ref{S:robust}, we described an admissible method (grafting) which is not linear scale preserving and another one (semi-reciprocal clustering) which is not excisive. Hence, Theorem \ref{theo_representability_excisiveness} states that neither of these methods is representable. Conversely, by combining Theorem \ref{theo_representability_excisiveness} with Propositions \ref{prop_reciprocal_nonreciprocal_excisive}, \ref{prop_reciprocal_nonreciprocal_linear_scale_preserving}, and~\ref{prop_reciprocal_nonreciprocal_stable}, we can ensure that the reciprocal $\ccalH^\R$ and nonreciprocal $\ccalH^{\NR}$ methods are both representable. 
Indeed, in Fig. \ref{fig_reciprocal_nonreciprocal_representable} we exhibit the collections of representers associated with each of the two methods, i.e. $\ccalH^\R \equiv \ccalH^{\omega_\R}$ and $\ccalH^{\NR} \equiv \ccalH^{\Omega_\NR}$. 

To see why the equivalence stated in Theorem \ref{theo_representability_excisiveness} is true for the case of reciprocal clustering, pick an arbitrary network $N=(X, A_X)$ and notice that the expansion constant [cf. \eqref{eqn:def_lipschitz_constant}] of any map $\phi$ from $\omega_\R$ to $N$ is equal to 
\begin{equation}\label{eqn_lipschitz_constant_reciprocal}
L(\phi; \omega_\R, N) = \max \big( A_X(\phi(z), \phi(z')), A_X(\phi(z'), \phi(z))\big),
\end{equation}
where $z$ and $z'$ denote the two nodes of the representer $\omega_\R$. Moreover, from the definition of optimal multiple between nodes $x, x' \in X$, we know that nodes $x$ and $x'$ must be the images of $z$ and $z'$ under $\phi$ which implies that
\begin{equation}\label{eqn_optimal_multiple_reciprocal}
\lambda^{\omega_\R}_X(x, x') = \max( A_X(x, x'), A_X(x', x)).
\end{equation}
By combining \eqref{eqn_optimal_multiple_reciprocal} and \eqref{eqn_def_representability_single_network} and comparing this with the definition of reciprocal clustering \eqref{eqn_reciprocal_clustering}, it follows that $\ccalH^\R \equiv \ccalH^{\omega_\R}$.

Similarly, to see why the equivalence $\ccalH^\NR \equiv \ccalH^{\Omega_\NR}$ is true, notice that for a pair of arbitrary nodes $x, x' \in X$, we may concatenate two minimizing chains $C(x, x')$ and $C(x', x)$ that achieve the minimum directed costs $\tdu^*_X(x, x')$ and $\tdu^*_X(x', x)$ respectively [cf. \eqref{eqn_nonreciprocal_chains}] to obtain a loop. The maximum dissimilarity in this loop is equal to $\max(\tdu^*_X(x, x'), \tdu^*_X(x', x))$ which is exactly $u^\NR_X(x, x')$ [cf. \eqref{eqn_nonreciprocal_clustering}].  Furthermore, if this loop is composed of $k$ nodes, then we may pick the representer in $\Omega_\NR$ with exactly $k$ nodes and map it injectively to the loop. Since by construction $x$ and $x'$ belong to the image of the map and its expansion constant is equal to the maximum dissimilarity in the loop $u^\NR_X(x, x')$, we obtain that $\lambda^{\Omega_\NR}_X(x, x')=u^\NR_X(x, x')$ from which the result follows.

In general, one can design representers $\Omega$ different from those in Fig.~\ref{fig_reciprocal_nonreciprocal_representable} to capture diverse structures in the directed network under study, thus leading to representable methods that go beyond reciprocal and nonreciprocal clustering. An example of this is given in Section~\ref{sec_numerical_experiments}.
Moreover, regardless of the particular choice of $\Omega$, Theorem~\ref{theo_representability_excisiveness} guarantees that the resulting hierarchical clustering method will be robust. 
This is also illustrated through a numerical experiment in the next section.

%%%   F   I   G   U   R   E   %%%%%%%%%%%%%%%%%%%%%%%%%%%%%%%%%%%%%%%%%%%%%
%
\begin{figure}
	\centering
	\includegraphics[width = 0.75\textwidth]{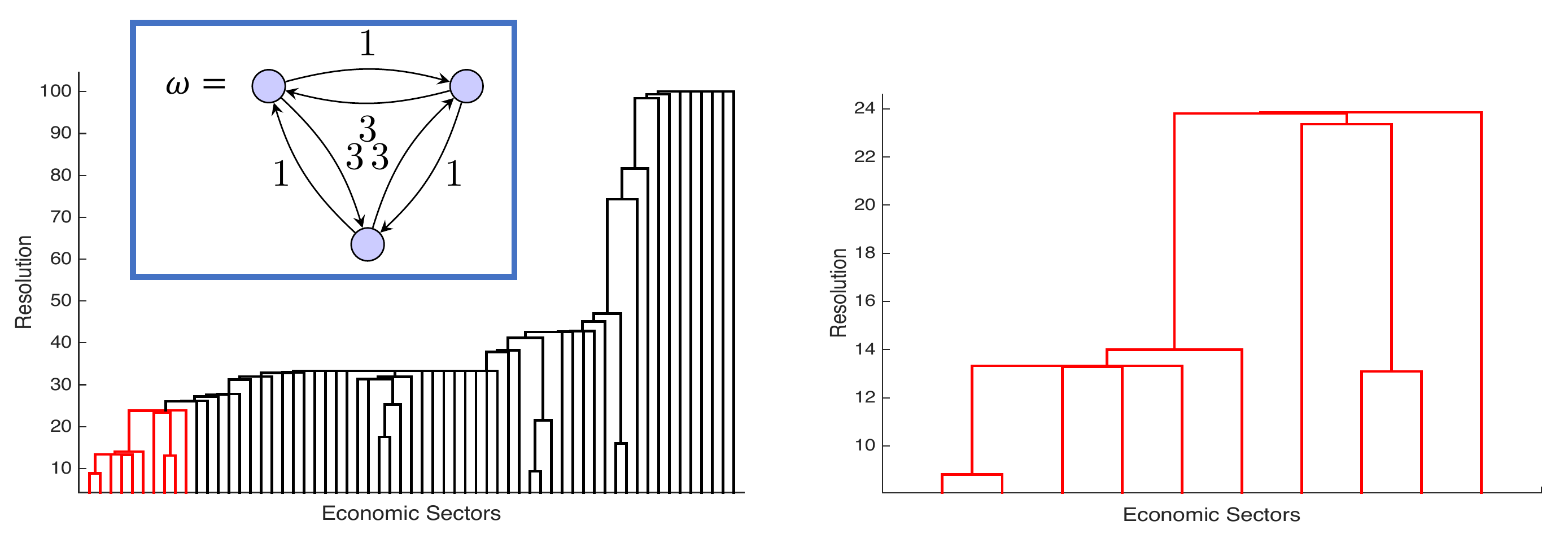}
	\vspace{-0.1in}
	\caption{(Left) Dendrogram obtained when clustering the economic network $N_I$ using the representable clustering method $\ccalH^\omega$, for the representer $\omega$ shown. (Right) Illustration of excisiveness. When clustering the subnetwork spanned by the economic sectors corresponding to the red branch in the left, the output dendrogram matches the branch.}
	\label{fig_numerical_experiments_together}
\end{figure}
%%%%%%%%%%%%%%%%%%%%%%%%%%%%%%%%%%%%%%%%%%%%%%%%%%%%%%%%%%%

%%%%%%%%%%%%%%%%%%%%% N U M E R I C A L       E X P E R I M E N T S %%%%%%%%%%%%%%%%%%%%%%%%%%%
\section{Experimental illustration}\label{sec_numerical_experiments}

The U.S. Department of Commerce publishes a yearly table of inputs and outputs organized by economic sectors\footnote{Available at \url{http://www.bea.gov/industry/io_annual.htm}}.
We focus on a specific section of this table, called \emph{uses}, that corresponds to the inputs to production for different industrial sectors. 
More precisely, we are given a set $I$ of 61 industrial sectors as defined by the North American Industry Classification System and a similarity function $U\!:\! I \! \times \! I \to \reals_+$ where $U(i, i')$ represents how much of the production of sector $i$ (in dollars) is used as an input of sector $i'$. Based on this, we define the network $N_I=(I, A_I)$ where the dissimilarity function $A_I$ satisfies $A_I(i,i)=0$ for all $i\in I$ and, for $i \neq i' \in I$, is given by
\begin{equation}\label{eqn_def_io_dissimilarity}
A_I(i, i') := \left(\frac{U(i, i')}{\sum_k U(k, i')}\right)^{-1}.
\end{equation}

The normalization in \eqref{eqn_def_io_dissimilarity} can be interpreted as the proportion of the input to productive sector $i'$ that comes from sector $i$. Consequently, we focus on the relative combination of inputs of a sector rather than the size of the economic sector itself. Moreover, we compute the inverse of this normalized quantity to obtain a measure $A_I$ that represents dissimilarities. I.e., if most of the productive input of $i'$ comes from $i$, then the normalization would output a number close to 1 and the dissimilarity measure $A_I(i, i')$ would be small.

We hierarchically cluster the network $N_I$ of economic sectors using the representable method $\ccalH^\omega$ associated with the representer $\omega$ in Fig.~\ref{fig_numerical_experiments_together}-(left); see Section~\ref{subsec_implementation_details} for details. 
From the structure of $\omega$, the method $\ccalH^\omega$ clusters two nodes if they can be joined via cycles of at most three nodes with strong connection in one direction -- represented by the dissimilarities equal to 1 --  while simultaneously having not too weak connections in the opposite direction -- represented by the dissimilarities equal to~3.

In Fig.~\ref{fig_numerical_experiments_together}-(left), we present the output dendrogram when the method $\ccalH^\omega$ is applied to $N_I$. Implementation details of this particular clustering method can be found in Section~\ref{subsec_implementation_details}. Theorem \ref{theo_representability_excisiveness} guarantees that if we take a branch of the dendrogram in Fig.~\ref{fig_numerical_experiments_together}-(left), e.g. the one highlighted in red, and focus on a subnetwork of the economic network spanned by the corresponding industrial sectors and cluster this subnetwork, we obtain a dendrogram equivalent to the red branch. Indeed, this is the case as can be seen in Fig.~\ref{fig_numerical_experiments_together}-(right). Similarly, we can multiply the economic network by a scalar and cluster the resulting multiple network and we are guaranteed to obtain a multiple of the original dendrogram [cf.~(P2)], and small perturbations to the network result in small perturbations to the output dendrogram [cf.~(P3)].

\subsection{Implementation of the representable method $\ccalH^\omega$}\label{subsec_implementation_details}

First notice that for an arbitrary network $N_X=(X, A_X)$, the disimilarity function $A_X$ can be represented as a matrix which, as it does not lead to confusion, we also denote as $A_X\in\reals^{n\times n}$. Define the matrix $B_X$ where each element is given by
\begin{align}\label{eqn_app_applying_clustering_algorithm_010}
[B_X]_{ij} =  \min_k  \max \Big(  [A_X]_{ij},[A_X]_{jk},[A_X]_{ki}, [A_X]_{ji}/3,[A_X]_{kj}/3,[A_X]_{ik}/3 \Big).
\end{align}
By comparing \eqref{eqn_app_applying_clustering_algorithm_010} with \eqref{eqn:def_lipschitz_constant}, it follows that the element $i,j$ of matrix $B_X$ stores the minimum of the expansion constant of a map $\phi$ from the representer $\omega$ to the network $N_X$ with nodes $i$ and $j$ in its image \emph{and} mapping a unit dissimilarity in $\omega$ to the directed dissimilarity from $i$ to $j$. 
From this interpretation of $B_X$ it follows immediately that the symmetric matrix
$\Lambda_X := \min (B_X, B_X^T)$ contains as elements the optimal multiples, i.e. $[\Lambda_X]_{i,j}=\lambda^{\omega}_X(i,j)$. To see this, notice that the optimal map from $\omega$ to $N_X$ attaining the minimum expansion constant in \eqref{eqn:multiple_as_lipschitz_constant} must contain nodes $i$ and $j$ in its image and must map a unit dissimilarity in $\omega$ either to the directed dissimilarity from $i$ to $j$ or from $j$ to $i$, thus $\lambda^{\omega}_X(i,j) = \min( [B_X]_{ij}, [B_X]_{ji})$.

Finally, we compute the output ultrametric as in \eqref{eqn_def_representability_single_network}, which is equivalent to applying single linkage clustering to the symmetric network $(X, \Lambda_X)$, thus any known single linkage algorithm \cite{Gabowetal86} can be used for this last step.

%%%%%%%%%%%%%%%%%%%%%%%%%%%%  C O N C L U S I O N  %%%%%%%%%%%%%%%%%%%%%%%%%%%%%%%
\section{Conclusion}\label{sec_conclusion}

We defined \emph{robustness} of hierarchical clustering methods via the fulfillment of three properties: excisiveness (the clustering output of a subnetwork does not depend on the information beyond the subnetwork), linear scale preservation (the clustering output is not modified by a change of units), and stability (a small perturbation in the network entails a small perturbation in the clustering output).  
As a generative model for hierarchical clustering methods we introduced the concept of representability. The behavior of representable methods is determined by specifying their output on a collection of representers. 
Moreover, we showed that every representable method can be decomposed into two phases: a symmetrizing map $\Lambda^\Omega$ followed by single linkage clustering.
This decomposition result enables the decoupled implementation of hierarchical clustering methods of practical relevance.
Our main result was the proof that, within the set of admissible hierarchical clustering methods, the subset of representable methods coincides with the class of robust methods as determined by the three aforementioned properties.

For future work, it seems interesting to understand how the complexity of computing $\Lambda^\Omega$ depends on the structure of the collection of associated representers $\Omega$. It also seems of particular interest to expand the list of desirable practical properties in order to get a more stringent characterization of the methods that are relevant in practice. We can consider, e.g., the notion of scale preservation for general dissimilarity transformations \emph{not} restricted to linear transformations as done in this paper. One overreaching aim is to identify conditions that need to be imposed on the representers so that the associated representable method complies with the stricter notion of practicality.

\appendix

\section{Relegated proofs}

\subsection{Proof of Proposition~\ref{prop_validity_repre}}

Given a collection $\Omega$ of representers $\omega=(X_{\omega}, A_{\omega})$, we want to see that for an arbitrary network $N=(X, A_X)$ the output $(X, u^\Omega_X)=\ccalH^{\Omega}(X,A_X)$ satisfies the identity, symmetry, and strong triangle inequality properties of an ultrametric. To show that the strong triangle inequality in \eqref{eqn_strong_triangle_inequality} is satisfied let $C^*(x,x')$ and $C^*(x',x'')$ be minimizing chains for $u^\Omega_X(x, x')$ and $u^\Omega_X(x', x'')$, respectively. Consider then the chain $C(x,x'')$ obtained by concatenating $C^*(x,x')$ and $C^*(x',x'')$, in that order. Notice that the maximum over $i$ of the optimal multiples $\lambda^{\Omega}_X(x_i, x_{i+1})$ in $C(x,x'')$ does not exceed the maximum multiples in each individual chain. Thus, the maximum multiple in the concatenated chain $C(x,x'')$ suffices to bound $u^{\Omega}_X(x,x'') \leq \max \big( u^{\Omega}_X(x,x'), u^{\Omega}_X(x',x'')\big)$ by \eqref{eqn_def_represent_1} as in \eqref{eqn_strong_triangle_inequality}. 
	
	To show the symmetry property, $u^\Omega_X(x, x') = u^\Omega_X(x', x)$ for all $x, x' \in X$, first notice that a direct implication of the definition of optimal multiples in \eqref{eqn:multiple_as_lipschitz_constant} is that $\lambda^\omega_X(x, x')=\lambda^\omega_X(x', x)$ for every representer $\omega$. From \eqref{eqn_Omega_multiple_from_omega} we then obtain that $\lambda^\Omega_X$ is symmetric, from where symmetry of $u^\Omega_X$ immediately follows.
	
	For the identity property, i.e. $u^\Omega_X(x, x') = 0$ if and only if $x=x'$, we first show that if $x=x'$ we must have $u_X^\Omega(x,x')=0$. Pick any $x \in X$, let $x'=x$ and pick the chain $C(x, x)=[x, x]$ starting and ending at $x$ with no intermediate nodes as a candidate minimizing chain in \eqref{eqn_def_represent_1}. While this particular chain need not be optimal in \eqref{eqn_def_represent_1} it nonetheless holds that
	\begin{equation}\label{eqn_ultram_identity_x_x_rep}
	0 \leq u^{\Omega}_X(x, x) \leq \lambda^{\Omega}_X(x, x),
	\end{equation}
	where the first inequality holds because all costs $\lambda^{\Omega}_X(x_i, x_{i+1})$ in \eqref{eqn_def_represent_1} are non-negative since they correspond to the expansion constant of some map, which is non-negative by definition \eqref{eqn:def_lipschitz_constant}. Notice that for the cost $\lambda^{\omega}_X(x, x)$ in \eqref{eqn:multiple_as_lipschitz_constant}, we minimize the expansion constant among maps $\phi_{x, x}$ that are only required to have node $x$ in its image. Thus, consider the map that takes all the nodes in any representer $\omega \in \Omega$ into node $x \in X$. From \eqref{eqn:def_lipschitz_constant}, the expansion constant of this map is zero which implies by \eqref{eqn:multiple_as_lipschitz_constant} that $\lambda^{\omega}_X(x, x)=0$ for all $\omega \in \Omega$. Combining this result with \eqref{eqn_Omega_multiple_from_omega} we then get that $\lambda^{\Omega}_X(x, x) = 0$ and from \eqref{eqn_ultram_identity_x_x_rep} we conclude that $u^{\Omega}_X(x, x) = 0$.
	
	In order to show that the condition $u_X^\Omega(x,x')=0$ implies that $x=x'$ we prove that if $x\neq x'$
	we must have $u_X^\Omega(x,x') > \alpha>0$ for some strictly positive constant $\alpha$. In proving this, we make use of the following claim.
	
	%%%% C L A I M %%%%%%%%%%%%%%%%%
	%
	\begin{claim}\label{lemma_weak_connect}
		Given a network $N=(X, A_X)$, a weakly connected representer $\omega=(X_{\omega}, A_{\omega})$, and a dissimilarity reducing map $\phi: X_{\omega} \to X$ whose image satisfies $|\text{Im}(\phi)| \geq 2$, there exists a pair of points $(z, z') \in \text{dom}(A_{\omega})$ for which $\phi(z) \neq \phi(z')$.
	\end{claim}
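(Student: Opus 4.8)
The plan is to establish the contrapositive. I will assume that $\phi$ collapses every pair in the domain, i.e. $\phi(z) = \phi(z')$ for all $(z, z') \in \dom(A_\omega)$, and show that this forces $\phi$ to be constant on all of $X_\omega$, so that $|\text{Im}(\phi)| = 1$, contradicting the hypothesis $|\text{Im}(\phi)| \geq 2$. The mechanism driving the argument is the weak connectivity of $\omega$: any two nodes are joined by a chain whose consecutive pairs lie in $\dom(A_\omega)$ in at least one direction, and collapsing each such pair propagates a single image value along the whole chain.

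Concretely, I would first fix arbitrary nodes $z_a, z_b \in X_\omega$ and, invoking weak connectivity, choose a chain $[z_a = z_0, z_1, \ldots, z_l = z_b]$ such that for each $i$ at least one of $(z_i, z_{i+1})$ or $(z_{i+1}, z_i)$ belongs to $\dom(A_\omega)$. Whichever orientation lies in the domain, the standing assumption applies to that ordered pair, and since equality of images is a symmetric relation we obtain $\phi(z_i) = \phi(z_{i+1})$ in every case. Chaining these equalities along the path yields $\phi(z_a) = \phi(z_b)$.

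Because $z_a$ and $z_b$ were arbitrary, $\phi$ takes a single value on $X_\omega$, i.e. $|\text{Im}(\phi)| = 1$, contradicting $|\text{Im}(\phi)| \geq 2$. Hence there must exist at least one pair $(z, z') \in \dom(A_\omega)$ with $\phi(z) \neq \phi(z')$, which is exactly the claim.

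I do not anticipate a genuine obstacle here; the statement is essentially a connectivity bookkeeping fact. The only point deserving care is treating the two possible orientations of each domain pair uniformly, which is immediate since the collapsing hypothesis is quantified over \emph{all} ordered pairs in $\dom(A_\omega)$ and image-equality is symmetric. It is worth noting that the \emph{dissimilarity reducing} hypothesis on $\phi$ is not actually used in this argument — only weak connectivity of $\omega$ together with $|\text{Im}(\phi)| \geq 2$ enter — although retaining it is harmless, as $\phi$ always carries that property in the settings where the claim is applied.
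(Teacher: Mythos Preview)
Your argument is correct and is essentially the contrapositive of the paper's direct proof: the paper picks two preimages with distinct images, runs a weak-connectivity chain between them, and observes that some consecutive pair along the chain must have distinct images, which is logically the same step as your propagation of equality along the chain. Your observation that the dissimilarity-reducing hypothesis is unused is also accurate; the paper's proof does not invoke it either.
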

	%%%%%%%%%%%%%%%%%%%%%%%%%%
	\begin{proof}
		Suppose that $\phi(z^1)=x^1$ and $\phi(z^2)=x^2$, with $x^1 \neq x^2 \in X$. These nodes can always be found since $|\text{Im}(\phi)| \geq 2$. By our hypothesis, the network is weakly connected. Hence, there must exist a chain $C(z^1, z^2) = [ z^1=z_0, z_1,\ldots, z_l=z^2]$ linking $z^1$ and $z^2$ for which either $(z_i, z_{i+1}) \in \text{dom}(A_\omega)$ or $(z_{i+1}, z_{i}) \in \text{dom}(A_\omega)$ for all $i=0,\ldots,l-1$. Focus on the image of this chain under the map $\phi$, $C(x^1, x^2) = [ x^1= \phi(z_0), \phi(z_1),\ldots, \phi(z_l)=x^2]$. Notice that not all the nodes are necessarily distinct, however, since the extreme nodes are different by construction, at least one pair of consecutive nodes must differ, say $\phi(z_p) \neq \phi(z_{p+1})$. Due to $\omega$ being weakly connected, in the original chain we must have either $(z_p, z_{p+1})$ or $(z_{p+1}, z_{p}) \in \text{dom}(A_{\omega})$. Hence, either $z=z_p$ and $z'=z_{p+1}$ or vice versa must fulfill the statement of the claim.
	\end{proof}
	%%%%%%%%%%%%%%%%%%%%%%%%%
	
	Returning to the main argument, observe that since pairwise dissimilarities in all networks $\omega \in \Omega$ are uniformly bounded, the maximum dissimilarity across all links of all representers 
	\begin{equation}\label{eqn_max_dissim_omega}
	d_{\max} = \sup_{\omega \in \Omega} \,\, \max_{(z, z') \in \text{dom}(A_{\omega})} A_{\omega}(z, z'),
	\end{equation} 
	is guaranteed to be finite. Define the separation of the network as its minimum positive dissimilarity, i.e., $\sep(X,A_X) := \min_{x \neq x'} A_X(x, x')$ and pick any real $\alpha$ such that $0 < \alpha < \sep(X, A_X)/d_{\max}$. Then for all $(z, z') \in \text{dom}(A_{\omega})$ and all $\omega \in \Omega$ we have
	\begin{equation}\label{eqn_dissim_separ_2}
	\alpha \ A_{\omega}(z, z') < \sep(X, A_X).
	\end{equation}
	Claim \ref{lemma_weak_connect} implies that regardless of the map $\phi$ chosen, this map transforms some defined dissimilarity in $\omega$, i.e. $A_{\omega}(z, z')$ for some $(z, z') \in \text{dom}(A_{\omega})$, into a dissimilarity in $N$. Moreover, every positive dissimilarity in $N$ is greater than or equal to the network separation $\sep(X, A_X)$. Hence, \eqref{eqn_dissim_separ_2} implies that there cannot be any dissimilarity reducing map $\phi$ with $|\text{Im}(\phi)| \geq 2$ from $\alpha * \omega$ to $N$ for any $\omega \in \Omega$. From \eqref{eqn:multiple_as_lipschitz_constant}, this implies that for all $x \neq x' \in X$ and for all $\omega$ we have that $\lambda^{\omega}_X(x, x') > \alpha > 0$. Hence, from \eqref{eqn_Omega_multiple_from_omega} we conclude that $\lambda^{\Omega}_X(x, x') >\alpha>0$, which in turn implies that the ultrametric value between two different nodes $u^{\Omega}_X(x, x')$ must be strictly positive.
%%%%%%%%%%%%%%%%%%%%%%%%%%%%%%%%%%%%%%%%%%%

\subsection{Proof of Theorem~\ref{theo_representability_excisiveness}}

We first prove that (P4) implies (P1)-(P3). 
Notice that the expansion constants of arbitrary maps \eqref{eqn:def_lipschitz_constant} satisfy
\begin{equation}\label{eqn:lipschitz_constant_linear}
L(\phi; \omega, \alpha*N) = \alpha \, L(\phi; \omega, N),
\end{equation}
for any positive constant $\alpha > 0$. 
That (P4) implies (P2) follows by combining the linear relation in \eqref{eqn:lipschitz_constant_linear} with the definition of a representable method in \eqref{eqn_def_represent_1}.

To show that representability implies excisiveness (P1), we must prove that \eqref{eqn_def_excisiveness} is true for a general representable clustering method $\ccalH^\Omega$. Hence, consider a network $N=(X, A_X)$, a resolution $\delta > 0$ and a subnetwork $N^\delta_i = \big(B_i(\delta),\ A_X \big|_{B_i(\delta) \times B_i(\delta)}\big)$ as defined in \eqref{eqn_excisiveness_subnetworks}, and define the output ultrametrics $(X, u^{\Omega}_X)=\ccalH^\Omega(N)$ and $(X, u^{\Omega}_{N^\delta_i})=\ccalH^\Omega(N^\delta_i)$. Since the identity map from $N^\delta_i$ to $N$ is dissimilarity reducing, admissibility of $\ccalH^\Omega$ implies [cf. Axiom of Transformation (A2)] 
\begin{equation}\label{eqn_representability_implies_excisiveness_00}
u^{\Omega}_{N^\delta_i}(x, x') \geq u_X^{\Omega}(x, x'),
\end{equation}
for all $x, x' \in B_i(\delta)$. 
In order to show the reverse inequality, pick arbitrary nodes $x, x' \in B_i(\delta)$. From the definition of subnetwork \eqref{eqn_excisiveness_subnetworks_ultrametrics}, it must be that 
\begin{align}\label{eqn_representability_implies_excisiveness}
	u^{\Omega}_X(x, x') \leq \delta, \qquad
	u^{\Omega}_X(x, x'') > \delta,
\end{align}
for all $x'' \not\in B_i(\delta)$.
The leftmost inequality in \eqref{eqn_representability_implies_excisiveness} implies that there exists a minimizing chain $C(x, x')=[x=x_0, x_1, ... , x_l=x']$ in definition \eqref{eqn_def_represent_1} and a series of maps $\phi_{x_j, x_{j+1}}$ for all $j$ determining the optimal multiples $\lambda^\Omega_X(x_j, x_{j+1}) \leq \delta$. Notice that the ultrametric value between any two nodes in the images of the maps $\phi_{x_j, x_{j+1}}$ is smaller than or equal to $\delta$. Hence, from \eqref{eqn_representability_implies_excisiveness} we have that the minimizing chain $C(x, x')$ and the image of every optimal dissimilarity reducing map is contained in $B_i(\delta)$ so that the same chain can be used to compute $u^{\Omega}_{N^\delta_i}(x, x')$. This implies that
\begin{equation}\label{eqn_representability_implies_excisiveness_000}
u^{\Omega}_{N^\delta_i}(x, x') \leq u_X^{\Omega}(x, x'),
\end{equation}
for all $x, x' \in B_i(\delta)$.
Combining \eqref{eqn_representability_implies_excisiveness_00} with \eqref{eqn_representability_implies_excisiveness_000} we obtain \eqref{eqn_def_excisiveness}, showing that (P4) implies (P1).

{To} prove that (P4) implies (P3), we resort to Proposition~\ref{prop_equivalence_representable_single_linkage} where we have that $\ccalH^\Omega \equiv \ccalH^\SL\circ \Lambda^\Omega.$ In \cite{clust-um} it was shown that 
$d_{\ccalN}(\ccalH^\SL(N_X),\ccalH^\SL(N_Y))\leq d_\ccalN(N_X,N_Y)$, for any $N_X$ and $N_Y$ in $\ccalN$. Thus, in order to establish our claim it is enough to prove that there exists a finite constant $L=L(\Omega)\geq 0$ such that
\begin{equation} 
d_{\ccalN}(\Lambda^\Omega(N_X),\Lambda^\Omega(N_Y))\leq L \, d_\ccalN(N_X,N_Y).
\end{equation}
We claim this to be true for $L(\Omega):=\big(\mathrm{sep}(\Omega)\big)^{-1}$.

In order to verify this, assume that $\eta=d_{\mathcal{N}}(N_X,N_Y)$ and pick any correspondence $R$ between $X$ and $Y$ such that $|A_X(x,x')-A_Y(y,y')|\leq 2\eta$ for all  $(x,y)$ and $(x',y')$ in $R$ [cf. \eqref{eqn_gh_distance}]. Fix any two pairs $(x,y)$ and $(x',y')$ in $R$.
For any representer $\omega\in \Omega$, let $\phi:\omega\rightarrow X$ be any map such that $x,x'\in\mathrm{Im}(\phi)$. Moreover, consider any function $\varphi:X\rightarrow Y$ such that $\varphi(x)=y$ and $\varphi(x')=y'$ and $(x'',\varphi(x''))\in R$ for all $x''\in X$. Notice that the definition of correspondence ensures that at least one such function $\varphi$ exists. Then, we have
\begin{align}\label{eqn_proof_stability_50}
	L(\varphi\circ\phi;\omega,N_Y) \! \leq \! \max_{\substack{(z, z') \in \text{dom}(A_\omega) \\ z \neq z'}} \!\!\!\!\!\frac{A_X(\phi(z), \phi(z'))}{A_\omega(z, z')} + 2\eta \,\, \mathrm{sep}(\omega)^{-1}
	 = L(\phi;\omega,N_X) + 2\eta \,\, \mathrm{sep}(\omega)^{-1}.
\end{align}
By construction, $y,y'\in \mathrm{Im}(\varphi\circ\phi)$. Thus, $L(\varphi\circ\phi;\omega,N_Y)$ is an upper bound for the optimal multiple $\lambda_Y^\omega(y,y')$ so from~\eqref{eqn_proof_stability_50} it follows that
\begin{equation}\label{eqn_proof_stability_60}
\lambda_Y^\omega(y,y')\leq L(\phi;\omega,N_X)+2\eta \,\, \mathrm{sep}(\omega)^{-1}.
\end{equation}
This inequality is valid for all functions $\phi:\omega\rightarrow X$ s.t. $x,x'\in\mathrm{Im}(\phi)$. Thus, for the particular map $\phi$ minimizing $L(\phi;\omega,N_X)$, \eqref{eqn_proof_stability_60} becomes $\lambda_Y^\omega(y,y')\leq \lambda_X^\omega(x,x')+2\eta \,\,\, \mathrm{sep}(\omega)^{-1}$. By symmetry, we obtain $|\lambda_X^\omega(x,x')- \lambda_Y^\omega(y,y')|\leq 2\eta \,\,\, \mathrm{sep}(\omega)^{-1}$, for all $(x,y),(x',y')\in R$. It then follows that 
\begin{equation}\label{eqn_proof_stability_100}
|\lambda_X^\Omega(x,x')- \lambda_Y^\Omega(y,y')|\leq 2\eta \,\,\, \mathrm{sep}(\Omega)^{-1},
\end{equation}
as claimed, where the fact that we require $\mathrm{sep}(\Omega) > 0$ guarantees that \eqref{eqn_proof_stability_100} is well-defined.
This completes the proof that (P4) implies (P1)-(P3).

To prove the converse statement, consider an arbitrary admissible clustering method $\ccalH$ which is excisive, linear scale preserving, and stable. 
We will construct a representable method $\ccalH^\Omega$ such that $\ccalH \equiv \ccalH^\Omega$.

Denote by $(X, u_X)=\ccalH(X, A_X)$ an arbitrary output ultrametric and define the collection of representers $\Omega$ as follows:

\begin{align}\label{eqn_definition_omega_excisiveness}
	\Omega = \left\{\, \omega \,\, \Big| \,\, \omega = \frac{1}{\!\!\!\underset{x, x' \in B_i(\delta)}{\max} u_X(x, x')} * N^\delta_i ,
	|B_i(\delta)| > 1, \delta > 0 \right\},
\end{align}
for all resolutions $\delta > 0$ and $N^\delta_i := (B_i(\delta), A_X|_{B_i(\delta) \times B_i(\delta)})$ being a subnetwork of all possible networks $N=(X, A_X)$ given the method $\ccalH$. In other words, we pick as representers the collection of all possible subnetworks generated by the method $\ccalH$, each of them scaled by the inverse of the maximum ultrametric obtained in such subnetwork. Notice that from the definition of subnetwork \eqref{eqn_excisiveness_subnetworks_ultrametrics} we have that
\begin{equation}\label{eqn_excisiveness_implies_representability}
\max_{x, x' \in B_i(\delta)} u_X(x, x') \leq \delta,
\end{equation}
which appears in the denominator of the definition \eqref{eqn_definition_omega_excisiveness} for every representer $\omega \in \Omega$.

We show equivalence of methods $\ccalH$ and $\ccalH^\Omega$ by showing that the ultrametric outputs coincide for every network. Pick an arbitrary network $N=(X, A_X)$ and two different nodes $x, x' \in X$ and define $\alpha := u_X(x,x')$. Since $\Omega$ was built considering all possible networks, including $N$, there is a representer $\omega \in \Omega$ that corresponds to the subnetwork $N^\alpha_i$ at resolution $\alpha$ that contains $x$ and $x'$. From \eqref{eqn_excisiveness_implies_representability}, the inclusion map $\phi$ from $\alpha * \omega$ to $N$ such that $\phi(x)=x$ is dissimilarity reducing and $x, x' \in \text{Im}(\phi)$. From definition \eqref{eqn:multiple_as_lipschitz_constant} this implies that $\lambda^{\omega}_X(x, x') \leq \alpha$. By substituting in \eqref{eqn_Omega_multiple_from_omega} and further substitution in \eqref{eqn_def_represent_1} we obtain that $u^\Omega_X(x, x') \leq \alpha$. Recalling that $\alpha = u_X(x,x')$ and that we chose the network $N$ and the pair of nodes $x, x'$ arbitrarily, we may conclude that $u^{\Omega}_X \leq u_X$, for every network $N$. 

In order to show the other direction of the inequality, we must first observe that for every representer, the ultrametric value given by $\ccalH$ between any pair of nodes in the representer is upper bounded by $1$. To see this, given a representer $\omega=(X_\omega, A_{X_\omega})$ associated with the subnetwork $N^\delta_i$ in \eqref{eqn_definition_omega_excisiveness} we have that 
\begin{align}\label{eqn_excisiveness_implies_representability_2}
	u_{X_\omega}(\tdx, \tdx') \! = \! \frac{1}{\underset{x, x' \in B_i(\delta)}{\max} u_X(x, x')} \,\, u_{B_i(\delta)}(\tdx, \tdx')
	\!
	= \! \frac{1}{\underset{x, x' \in B_i(\delta)}{\max} u_X(x, x')} \,\, u_{X}|_{B_i(\delta) \times B_i(\delta)}(\tdx, \tdx') \leq 1, \nonumber
\end{align}
for all $\tdx, \tdx' \in X_\omega$.
The first equality in \eqref{eqn_excisiveness_implies_representability_2} is implied by the definition of $\omega$ in \eqref{eqn_definition_omega_excisiveness} and linear scale preservation of $\ccalH$. The second equality is derived from excisiveness of $\ccalH$.

Pick an arbitrary network $N=(X, A_X)$ and a pair of nodes $x, x' \in X$ and define $\beta := u^{\Omega}_X(x, x')$. This means that there exists a minimizing chain $C(x, x')=[x'=x_0, x_1, ... , x_l=x']$ such that for every consecutive pair of nodes we can find a dissimilarity reducing map $\phi_{x_j, x_{j+1}}$ from $\beta * \omega_j$ to $N$ for some representer $\omega_j \in \Omega$ such that $x_j, x_{j+1} \in \text{Im}(\phi_{x_j, x_{j+1}})$. Focus on a particular pair of consecutive nodes $x_j, x_{j+1}$ and denote by $p_j, p_{j+1}$ two respective pre-images on $\omega_j=(X_{\omega_j}, A_{X_{\omega_j}})$ under the map $\phi_{x_j, x_{j+1}}$. Without loss of generality, we can assume that $x_j \neq x_{j+1}$ for all $j$. The pre-images need not be unique. Denote by $\beta * \omega_j=(X^\beta_{\omega_j}, \beta \, A_{X_{\omega_j}})$ the $\beta$ multiple of the representer $\omega_j$. Since $\phi_{x_j, x_{j+1}}$ is a dissimilarity reducing map from $\beta* \omega_j$ to $N$, the Axiom of Transformation (A2) implies that
\begin{equation}\label{eqn_excisiveness_implies_representability_3}
u_{X^\beta_{\omega_j}}(p_j, p_{j+1}) \geq u_X(x_j, x_{j+1}).
\end{equation}
Moreover, we can assert that
\begin{equation}\label{eqn_excisiveness_implies_representability_4}
u_{X^\beta_{\omega_j}}(p_j, p_{j+1}) = \beta \, u_{X_{\omega_j}}(p_j, p_{j+1}) \leq \beta,
\end{equation}
where the equality is due to linear scale preservation and the inequality is justified by \eqref{eqn_excisiveness_implies_representability_2}. From the combination of \eqref{eqn_excisiveness_implies_representability_3} and \eqref{eqn_excisiveness_implies_representability_4} we obtain that $u_X(x_j, x_{j+1}) \leq \beta$. 
Since this is true for an arbitrary pair of consecutive nodes in $C(x, x')$, from the strong triangle inequality we have that $u_X(x, x') \leq \max_j u_X(x_j, x_{j+1}) \leq \beta$.
Recalling that $\beta = u^{\Omega}_X(x, x')$ and that the network $N$ was arbitrary, we can conclude that $u^{\Omega}_X \geq u_X$, for every network $N=(X, A_X)$. Combining this with $u^{\Omega}_X \leq u_X$, we conclude that $u^{\Omega}_X = u_X$, completing the proof.

\vspace{-1mm}

\section*{Acknowledgment}
The authors would like to thank Alejandro Ribeiro for his valuable technical input and his thoughtful comments on an earlier version of this manuscript.

\vspace{-2mm}

\bibliographystyle{siamplain}
\bibliography{clustering_biblio}

\end{document}